\newcommand{\argmin}{\mathop{\rm argmin}}
\newcommand{\argmax}{\mathop{\rm argmax}}
\newtheorem{thm}{Theorem}[section]
\newtheorem{lem}[thm]{Lemma}
\newtheorem{defn}[thm]{Definition}
\newtheorem{rem}{Remark}[section]
\def\approxcorrect{\checkmark\kern-1.1ex\raisebox{.89ex}{$\times$}}
\def\1{\bm{1}}
\def\rmA{{\mathbf{A}}}
\def\rmB{{\mathbf{B}}}
\def\rmC{{\mathbf{C}}}
\def\rmS{{\mathbf{S}}}
\def\rmU{{\mathbf{U}}}
\def\rmV{{\mathbf{V}}}
\def\rmX{{\mathbf{X}}}
\def\rmZ{{\mathbf{Z}}}
\DeclareMathAlphabet{\mathsfit}{\encodingdefault}{\sfdefault}{m}{sl}
\SetMathAlphabet{\mathsfit}{bold}{\encodingdefault}{\sfdefault}{bx}{n}
\def\sR{{\mathbb{R}}}
\newcommand{\E}{\mathbb{E}}
\renewcommand{\E}{\mathop{\mathbb{E}}}
\title{A Batch-to-Online Transformation under Random-Order Model}
\author{%
  Jing Dong \\
  The Chinese University of Hong Kong, Shenzhen \\
  \texttt{jingdong@link.cuhk.edu.cn} \\
   \And
   Yuichi Yoshida \\
   National Institute of Informatics \\
   \texttt{yyoshida@nii.ac.jp} \\
}
\begin{document}

\maketitle

\begin{abstract}
  We introduce a transformation framework that can be utilized to develop online algorithms with low $\epsilon$-approximate regret in the random-order model from offline approximation algorithms. We first give a general reduction theorem that transforms an offline approximation algorithm with low average sensitivity to an online algorithm with low $\epsilon$-approximate regret. We then demonstrate that offline approximation algorithms can be transformed into a low-sensitivity version using a coreset construction method. To showcase the versatility of our approach, we apply it to various problems, including online $(k,z)$-clustering, online matrix approximation, and online regression, and successfully achieve polylogarithmic $\epsilon$-approximate regret for each problem. Moreover, we show that in all three cases, our algorithm also enjoys low inconsistency, which may be desired in some online applications. 
\end{abstract}

\section{Introduction}
In online learning literature, stochastic and adversarial settings are two of the most well-studied cases. Although the stochastic setting is not often satisfied in real applications, the performance and guarantees of online algorithms in the adversarial case are considerably compromised. This is particularly true for important online tasks such as $k$-means clustering, which gives a significantly worse guarantee than their offline or stochastic counterparts \citep{cohen2021online}. As a result, their practical applicability is greatly limited.

Recently, the random-order model has been introduced as a means of modeling learning scenarios that fall between the stochastic and adversarial settings \citep{garber2020online,sherman2021optimal}. In the random-order model, the adversary is permitted to choose the set of losses, with full knowledge of the learning algorithm, but has no influence over the order in which the losses are presented to the learner. Instead, the loss sequence is uniformly and randomly permuted. This effectively bridges the gap between the stochastic setting, where only the distribution of losses can be chosen by the setting, and the adversarial setting, where the adversary has complete control over the order of the losses presented to the learner.

In this work, we introduce a batch-to-online transformation framework designed specifically for the random-order model. Our framework facilitates the conversion of an offline approximation algorithm into an online learning algorithm with $\epsilon$-approximate regret guarantees. Our primary technical tool is average sensitivity, which was initially proposed by \citet{varma2021average} to describe the algorithm's average-case sensitivity against input perturbations. We demonstrate that any offline approximation algorithm with low average sensitivity will result in a transformed online counterpart that has low $\epsilon$-approximate regret. To achieve small average sensitivity for offline algorithms, we leverage the idea of a coreset~\citep{agarwal2005geometric,har2004coresets}, which is a small but representative subset of a larger dataset that preserves important properties of the original data. We present a coreset construction method that attains low average sensitivity, and when combined with the approximation algorithm, yields an overall algorithm with low average sensitivity. 

To showcase the practicality and versatility of our framework, we apply it to three popular online learning problems: online $(k,z)$-clustering, online matrix approximation, and online regression. In all three cases, our approach yields a polylogarithmic $\epsilon$-approximate regret. Furthermore, due to the low average sensitivity of our algorithms, they also enjoy low inconsistency, which is the cumulative number of times the solution changes. This additional property may prove useful in certain online settings. We note that this inconsistency has also been investigated in the classic online learning and multi-armed bandits literature \citep{agrawal1988asymptotically,cesa2013online}.



\section{Related Works}

\paragraph{Average sensitivity}
\citet{varma2021average} first introduced the notion of average sensitivity and proposed algorithms with low average sensitivity on graph problems such as minimum spanning tree, minimum cut, and minimum vertex cover problems. Various other problems have then been analyzed for the average sensitivity, including dynamic programming problems \citep{kumabe2022averagea}, spectral clustering \citep{peng2020average}, Euclidean $k$-clustering \citep{yoshida2022average}, maximum matching problems \citep{yoshida2021sensitivity}, and decision tree learning problem \citep{haraaverage23}.

\paragraph{Online (consistent) $(k,z)$-clustering}
While $(k,z)$-clustering, which includes $k$-means ($z=2$) and $k$-median ($z=1$) as its special cases, has been studied extensively from various perspectives such as combinatorial optimization and probabilistic modeling, it can be NP-hard to obtain the exact solution \citep{impagliazzo2001problems}. 
Thus most theoretical works have been focused on designing approximation algorithms. In the online setting, \citet{li2018quasi} proposed a Bayesian adaptive online clustering algorithm that enjoys a minimal sublinear regret. However, the algorithm is allowed to output more than $k$ clusters. Without such assumption, \citet{cohen2021online} proposed the first algorithm that attains $\epsilon$-approximate regret of $O(k\sqrt{d^3 n} \log(\epsilon^{-1}dkn))$  for $k$-means clustering under adversarial setting. 

On a separate vein, \citet{pmlr-v70-lattanzi17a} proposed an online consistent $(k,z)$-clustering algorithm that produces a $2^{O(z)}$-approximate solution for the data points obtained so far at each step while maintaining an inconsistency bound of $O(k^2 \log^4 n)$. This implies that their algorithm only updates the output $O(k^2 \log^4 n)$ many times. 
Then, \citet{yoshida2022average} gave an online algorithm with approximation ratio $(1+\epsilon)$ and inconsistency $\mathrm{poly}(d,k,2^z,\epsilon^{-1}) \cdot \log n$ in the random-order model.
We remark that the way how the losses are computed in \citet{pmlr-v70-lattanzi17a,yoshida2022average} is different from that of the online setting, which \cite{cohen2021online} and our paper considered. 

\paragraph{Online convex optimization and online principle component analysis (PCA) under the random-order model}
The online random-order optimization was proposed in \citet{garber2020online}, which established a bound of $O(\log n)$ for smooth and strongly convex losses. This result is then improved by \citet{sherman2021optimal} while still requiring smooth and convex losses. 

The techniques and results are then extended to online PCA with the random-order setting, for which a regret of $O \left(\zeta^{-1} \sqrt{kn}\right)$ was established, where $\zeta$ is an instance-dependent constant. This recovers the regret for online PCA in the stochastic setting \citep{warmuth2008randomized,nie2016online}. We remark that PCA can be viewed as a special case of matrix approximation, in which the matrix being approximated is the covariance matrix of the data, and we discuss the more general problem of matrix approximation in this paper.  

\section{Preliminaries}\label{sec:2}
For a positive integer $n$, let $[n]$ denote the set $\{1,2,\ldots,n\}$.
For real values $a,b \in \mathbb{R}$, $a \in(1 \pm \epsilon) b$ is a shorthand for $(1-\epsilon) b \leq a \leq(1+\epsilon) b$.

\subsection{Offline Learning}
We consider a general class of learning problems. Let $\mathcal{X}$ be the input space, $\Theta$ be the parameter space, and $\ell: \Theta \times \mathcal{X} \rightarrow$ $\mathbb{R}_{+}$ be a loss function. For simplicity, we assume the loss is bounded, i.e., $\ell(\theta, x) \leq 1$. Given a set of $n$ data points $X \in \mathcal{X}^n$, we are asked to learn a parameter $\theta \in \Theta$ that minimizes the objective value $\ell(\theta, X) := \sum_{x \in X} \ell(\theta,x)$. 
We call this problem the \emph{offline learning problem}.

When the exact minimization of the loss function $\ell$ is NP-hard or computationally demanding, one may only hope to obtain an approximate solution efficiently. 
Specifically, for $\alpha > 0$, we say a solution $\theta \in \Theta$ is \emph{$\alpha$-approximate} for $X \in \mathcal{X}^n$ if $\ell(\theta, X) \leq \alpha \cdot \min_{\tilde{\theta} \in \Theta} \ell(\tilde{\theta}, X)$.  
The value $\alpha$ is called the \emph{approximation ratio} of the solution.
We say a (possibly randomized) algorithm $\mathcal{A}$ is \emph{$\alpha$-approximate} if the expected approximation ratio of the output solution is at most $\alpha$. 
\subsection{Online Learning with Random-Order Model}
In the \emph{online learning problem}, instead of receiving all points at once, the data arrives sequentially throughout a time horizon $n$.
Specifically, the data point comes one by one, where $x_t$ comes at time $t \in [n]$.
At the time $t$, using the collected data points $X_{t-1} := (x_1, \ldots, x_{t-1})$, we are asked to output a parameter $\theta_t \in \Theta$. 
Then we receive the data point $x_t$ and incur a loss of $\ell(\theta_t, x_t)$. 
In this work, we consider the \emph{random-order model}, in which the data points $x_1,\ldots,x_n$ may be chosen adversarially, but their ordering is randomly permuted before the algorithm runs. 

To evaluate our performance, we use the notion of regret, which is the cumulative difference between our solution and the best solution in hindsight. In cases where obtaining the exact solution is hard, and one may only hope to obtain an approximate solution efficiently, we use the \emph{$\epsilon$-approximate regret}. 

\begin{defn}[$\epsilon$-approximate regret for the random-order model]
Given a (randomized) algorithm $\mathcal{A}$ that outputs a sequence of parameters $\theta_1, \ldots, \theta_n$ when given input $x_1, \ldots, x_n$. The $\epsilon$-approximate regret of $\mathcal{A}$ for the random-order model is defined as 
\begin{align*}
    \mathrm{Regret}_\epsilon(n) := \E_{\mathcal{A},\{x_t\}}\left[\sum^n_{t=1} \ell(\theta_t, x_t) - (1+\epsilon) \cdot \min_{\tilde{\theta} \in \Theta} \sum^n_{t=1} \ell(\tilde{\theta}, x_t)\right] \,.
\end{align*}
where the randomness is over the internal randomness of $\mathcal{A}$ and the ordering of data points.    
When $\epsilon = 0$, we simply call it the \emph{regret}.
\end{defn}

In certain cases, online algorithms are required to maintain a good solution while minimizing \emph{inconsistency}, which is quantified as the number of times the solution changes. This can be expressed formally as 
    $\mathrm{Inconsistency}(n) = \E_{\mathcal{A},\{x_t\}}[\sum^{n-1}_{t=1} \mathbb{I}\{\theta_t \neq \theta_{t+1}\}]$,
where $\mathbb{I}$ is the indicator function. 

\subsection{Average sensitivity}
On a high level, the notion of average sensitivity describes the differences in the performance of a randomized algorithm with respect to input changes. This difference is captured by the total variation distance, which is defined below.  
\begin{defn}
    For a measurable space $(\Omega, \mathcal{F})$ and probability measures $P, Q$ defined on $(\Omega, \mathcal{F})$. The total variation distance between $P$ and $Q$ is defined as $\mathrm{TV}(P,Q) := \sup_{A \in \mathcal{F}} \left|P(A) - Q(A) \right|$.
\end{defn}
Equipped with this, the average sensitivity of a randomized algorithm is formally defined as the average total variation distance between the algorithm's output on two training data sets that differ by deleting one point randomly. 
For a dataset $X = (x_1, \ldots, x_n) \in \mathcal{X}^n$ and $i \in [n]$, let $X^{(i)}$ denote the set $(x_1, \ldots, x_{i-1}, x_{i+1}, \ldots, x_n)$ obtained by deleting the $i$-th data point.
Then, the following definition gives a detailed description of the notion:
\begin{defn}[Average Sensitivity \citep{varma2021average,yoshida2022average}]
    Let $\mathcal{A}$ be a (randomized) algorithm that takes an input $X \in \mathcal{X}^n$ and outputs $\mathcal{A}(X)$.
    For $\beta:\mathbb{Z}_+ \to \mathbb{R}_+$, we say that the average sensitivity of $\mathcal{A}$ is at most $\beta$ if 
    \begin{align*}
        \frac{1}{n} \sum^n_{i=1} \mathrm{TV}(\mathcal{A}(X), \mathcal{A}(X^{(i)})) \leq \beta(n)\,,
    \end{align*}
    for any $X \in \mathcal{X}^n$, where we identify $\mathcal{A}(X)$ and $\mathcal{A}(X^{(i)})$ with their distributions.
\end{defn}

\section{Batch-to-Online Transformation in the Random-Order Model}\label{sec:framework}
In this section, we describe a general framework that can transform any offline $(1+\epsilon)$-approximate algorithm into an online algorithm with low $\epsilon$-approximate regret. 
Our goal is to show the following.
\begin{thm}\label{thm:regret}
    Let $\mathcal{A}$ be a (randomized) $(1+\epsilon)$-approximate algorithm for the offline learning algorithm with average sensitivity $\beta: \mathbb{Z}_+ \to \mathbb{R}_+$.
    Then, there exists an online learning algorithm in the random-order model such that $\mathrm{Regret}_\epsilon(n) = O \left(\sum^n_{t=1}\beta(t) + 1\right)$.
\end{thm}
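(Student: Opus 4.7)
The plan is to use the obvious online algorithm: at each time $t$, output $\theta_t := \mathcal{A}(X_{t-1})$ (with an arbitrary default, say any fixed parameter, for $t=1$ where no data has arrived yet). All the work then goes into analyzing its $\epsilon$-approximate regret. The crucial structural observation about the random-order model is a \emph{deletion-symmetry}: if we condition on the set $S_t := \{x_1, \dots, x_t\}$, then by the uniform-random ordering, $x_t$ is distributed uniformly over $S_t$, and $X_{t-1}$ is precisely $S_t$ with that uniformly chosen element removed, i.e.\ $X_{t-1} = S_t^{(I)}$ where $I$ is uniform on $[t]$ and $x_t$ is the $I$-th element of $S_t$ (under any fixed labeling).

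I would then bound $\E[\ell(\theta_t, x_t) \mid S_t]$ by pivoting through the algorithm run on the \emph{full} set $S_t$. Since $\ell \in [0,1]$, the TV bound on expectations of bounded functions yields
\begin{align*}
\E_{\mathcal{A}}\!\left[\ell(\mathcal{A}(S_t^{(I)}), x_I) \mid S_t, I\right]
\le \E_{\mathcal{A}}\!\left[\ell(\mathcal{A}(S_t), x_I) \mid S_t, I\right] + \mathrm{TV}(\mathcal{A}(S_t^{(I)}), \mathcal{A}(S_t)).
\end{align*}
Averaging over $I$ and using the average-sensitivity definition gives the TV-sum $\le \beta(t)$, while the first term averages to $\frac{1}{t}\E_{\mathcal{A}}[\ell(\mathcal{A}(S_t), S_t) \mid S_t]$, which by the $(1+\epsilon)$-approximation guarantee is at most $\frac{1+\epsilon}{t} \cdot \mathrm{OPT}(S_t)$, where $\mathrm{OPT}(S_t) = \min_\theta \ell(\theta, S_t)$.

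To finish, I would take expectation over $S_t$ and sum over $t$. The sensitivity terms contribute exactly $\sum_{t=1}^n \beta(t)$, plus an $O(1)$ slack for $t=1$ where the algorithm has no input. For the approximation terms, the key step is to bound $\E_{S_t}[\mathrm{OPT}(S_t)]$ by plugging in the global optimum $\theta^\star := \arg\min_\theta \ell(\theta, X_n)$: $\mathrm{OPT}(S_t) \le \ell(\theta^\star, S_t)$, and since $S_t$ is a uniformly random size-$t$ subset of $X_n$, linearity gives $\E_{S_t}[\ell(\theta^\star, S_t)] = (t/n)\,\mathrm{OPT}$. Hence $\sum_{t=1}^n \frac{1}{t}\E[\mathrm{OPT}(S_t)] \le \mathrm{OPT}$, so the total expected loss is at most $(1+\epsilon)\,\mathrm{OPT} + \sum_t \beta(t) + O(1)$, which is exactly what is claimed.

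The main conceptual obstacle — and the reason the random-order assumption is essential — is the cancellation in the final summation: without the $\E[\mathrm{OPT}(S_t)] \le (t/n)\mathrm{OPT}$ estimate, one only gets a harmonic blow-up of $O(\log n)\cdot\mathrm{OPT}$. The remaining technical subtleties are routine: carefully stating the TV inequality for bounded losses, handling the $t=1$ boundary term, and verifying that the conditional distribution of $x_t$ given $S_t$ really is uniform, which follows from exchangeability of a uniformly random permutation.
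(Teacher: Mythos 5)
Your proposal is correct and follows essentially the same route as the paper: the same trivial reduction (output $\mathcal{A}(X_{t-1})$ at each step), the same leave-one-out/TV pivot (the paper's Lemma~\ref{lem:loo-and-replace}), the same use of the approximation guarantee on the aggregated loss, and the same cancellation $\E[\mathrm{OPT}(S_t)]\le (t/n)\,\mathrm{OPT}_n$ that collapses the harmonic sum to $\mathrm{OPT}_n$. The only difference is cosmetic: by conditioning on the set $S_t$ and identifying $X_{t-1}=S_t^{(I)}$ with $I$ uniform, you pay a single $\beta(t)$ per step, whereas the paper reasons about the future point $x_{t+1}$ and pays $2\beta(t)$ --- both yield the same $O\bigl(\sum_{t}\beta(t)+1\bigr)$ bound.
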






Our method is described in Algorithm \ref{alg:reduction}. 
Let $\mathcal{A}$ be an approximation algorithm for the offline learning problem.
Then, at each time step, based on the collected data $X_{t-1}$, we simply output $\theta_t = \mathcal{A}(X_{t-1})$. 

\begin{algorithm}[htb]
\DontPrintSemicolon
  \KwInput{Offline approximation algorithm $\mathcal{A}$.}
  \For{$t = 1, \ldots, n$}{
  Obtain $\theta_t$ by running $\mathcal{A}$ on $X_{t-1}$.\;
  Receive $x_t$ and $\ell(\theta_t, x_t)$.
  }
\caption{General batch-to-online conversion}\label{alg:reduction}
\end{algorithm}

To show that Algorithm~\ref{alg:reduction} achieves a low approximate regret when $\mathcal{A}$ has a low average sensitivity, the following lemma is useful.

\begin{restatable}[]{lem}{looandreplace}
\label{lem:loo-and-replace}
Let $\mathcal{A}$ be a (randomized) algorithm for the offline learning problem with average sensitivity $\beta:\mathbb{Z}_+ \to \mathbb{R}_+$.
    Then for any input $X\in \mathcal{X}^n$, we have
    \[
        \frac{1}{n}\sum_{i = 1}^n  \E_{\mathcal{A}}[\ell(\mathcal{A}(X^{(i)}),x_i)] =
        \frac{1}{n}\sum_{i = 1}^n \E_{\mathcal{A}}[\ell(\mathcal{A}(X),x_i)] \pm \beta(n) \,,
    \]
    where $x = a \pm b$ means $a - b \leq x \leq a + b$.
\end{restatable}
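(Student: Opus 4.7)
The plan is to reduce the claim, term-by-term, to a standard fact about the total variation distance: if $P$ and $Q$ are distributions on the same space and $f$ is a measurable function with $0 \le f \le 1$, then $|\E_{\theta\sim P}[f(\theta)] - \E_{\theta\sim Q}[f(\theta)]| \le \mathrm{TV}(P,Q)$. Since the loss is assumed to satisfy $\ell(\theta,x) \le 1$, this applies with $f(\theta) = \ell(\theta, x_i)$.

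First, for each fixed $i \in [n]$, I would apply the above inequality with $P$ being the distribution of $\mathcal{A}(X)$ and $Q$ the distribution of $\mathcal{A}(X^{(i)})$, viewing $\ell(\cdot, x_i)$ as a bounded test function on $\Theta$. This yields
\[
    \bigl| \E_{\mathcal{A}}[\ell(\mathcal{A}(X), x_i)] - \E_{\mathcal{A}}[\ell(\mathcal{A}(X^{(i)}), x_i)] \bigr| \;\le\; \mathrm{TV}\bigl(\mathcal{A}(X), \mathcal{A}(X^{(i)})\bigr).
\]
Then I would average over $i$, pull the absolute value inside the sum via the triangle inequality, and invoke the average sensitivity assumption directly:
\[
    \Bigl|\tfrac{1}{n}\sum_{i=1}^n \E_{\mathcal{A}}[\ell(\mathcal{A}(X), x_i)] - \tfrac{1}{n}\sum_{i=1}^n \E_{\mathcal{A}}[\ell(\mathcal{A}(X^{(i)}), x_i)]\Bigr| \le \tfrac{1}{n}\sum_{i=1}^n \mathrm{TV}\bigl(\mathcal{A}(X), \mathcal{A}(X^{(i)})\bigr) \le \beta(n).
\]
Interpreting this two-sided bound as a $\pm \beta(n)$ equality yields exactly the statement of the lemma.

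There is no real obstacle here; the only subtlety worth flagging is that the TV inequality I rely on requires $\ell(\cdot, x_i) \in [0,1]$, which is guaranteed by the paper's blanket boundedness assumption $\ell(\theta,x) \le 1$ stated in Section~2.1. Without that normalization one would pick up an extra multiplicative constant equal to the $L^\infty$ bound of $\ell$. I would make this dependence explicit in the write-up so that the reader sees where the assumption enters, but otherwise the argument is a two-line application of the definitions of TV distance and average sensitivity.
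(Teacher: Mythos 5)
Your proposal is correct and follows essentially the same route as the paper's own proof: bound each term $\bigl|\E_{\mathcal{A}}[\ell(\mathcal{A}(X),x_i)] - \E_{\mathcal{A}}[\ell(\mathcal{A}(X^{(i)}),x_i)]\bigr|$ by $\mathrm{TV}(\mathcal{A}(X),\mathcal{A}(X^{(i)}))$ using the boundedness $\ell \le 1$, then average over $i$ and invoke the definition of average sensitivity. Your explicit remark about where the normalization $\ell(\theta,x)\le 1$ enters is a useful clarification that the paper leaves implicit, but the argument is otherwise identical.
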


\begin{proof}[Proof of Theorem~\ref{thm:regret}]
Consider Algorithm~\ref{alg:reduction}.
For any $t \in [n]$, we have
\begin{align*}
   & \E_{\mathcal{A},\{x_i\}}\left[ \ell(\theta_{t+1}, x_{t+1}) -\frac{1}{t}  \ell(\theta_{t+1}, X_t) \right]
   =  \E_{\mathcal{A},\{x_i\}}\left[ \frac{1}{t} \sum^t_{i=1} \left(\ell(\theta_{t+1}, x_{t+1}) - \ell(\theta_{t+1}, x_i) \right)\right] \\
   & =  \E_{\mathcal{A},\{x_i\}}\left[ \frac{1}{t} \sum^t_{i=1} \left(\ell(\mathcal{A}(X_t), x_{t+1}) - \ell(\mathcal{A}(X_t), x_i) \right)\right] \\
   & \leq  \E_{\mathcal{A},\{x_i\}}\left[ \frac{1}{t} \sum^t_{i=1} \left(\ell(\mathcal{A}(X_t), x_{t+1}) - \ell(\mathcal{A}(X_t^{(i)}), x_i) \right)\right] + \beta(t) \tag{By Lemma~\ref{lem:loo-and-replace}} \\
   & = \E_{\mathcal{A},\{x_i\}}\left[ \frac{1}{t} \sum^t_{i=1} \left(\ell(\mathcal{A}(X_t), x_{t+1}) - \ell(\mathcal{A}(X_t^{(i)}), x_{t+1}) \right)\right] + \beta(t) \\
   & \leq \E_{\mathcal{A},\{x_i\}}\left[ \frac{1}{t} \sum^t_{i=1} \mathrm{TV}(\mathcal{A}(X_t),\mathcal{A}(X_t^{(i)}))\right] + \beta(t)  \leq 2\beta(t) \,,
\end{align*}
where the last equality follows by replacing $x_i$ with $x_{t+1}$ in $\ell(\mathcal{A}(X_t^{(i)}), x_i)$ because they have the same distribution, and the last inequality is by the average sensitivity of the algorithm. 

Rearranging the terms, we have
\begin{align*}
    \E_{\mathcal{A},\{x_i\}}\left[ \ell(\theta_{t+1}, x_{t+1})\right] 
    \leq \ & \E_{\mathcal{A},\{x_i\}}\left[ \frac{\ell(\theta_{t+1}, X_t) }{t}  \right] + 2\beta(t) 
    \leq \  \E_{\{x_i\}}\left[ \frac{(1 + \epsilon) \mathrm{OPT}_t}{t}  \right] + 2\beta(t) \,,
\end{align*}
where $\mathrm{OPT}_t := \min_\theta \ell(\theta, X_t)$ is the optimal value with respect to $X_t$, and the second inequality holds because the approximation ratio of $\theta_{t+1}$ is  $1+\epsilon$ in expectation.

Taking summation over both sides, we have
\begin{align*}
    & \E_{\mathcal{A},\{x_i\}}\left[\sum^{n}_{t=1} \ell(\theta_{t}, x_{t})\right] 
    = \E_{\mathcal{A},\{x_i\}} \left[\ell(\theta_{1}, x_{1})\right] + \E_{\mathcal{A},\{x_i\}}\left[\sum^{n-1}_{t=1} \ell(\theta_{t+1}, x_{t+1})\right]  \\
    & \leq 1 + \E_{\{x_i\}}\left[ \sum^{n-1}_{t=1}\frac{(1 + \epsilon) \mathrm{OPT}_t}{t}  \right] + 2\sum^{n-1}_{t=1}\beta(t) \,.
    \end{align*}
Fix the ordering $x_1,\ldots,x_n$, and let $c_i\; (i \in [t])$ be the loss incurred by $x_i$ in $\mathrm{OPT}_n$.
In particular, we have  $\mathrm{OPT}_n = \sum^n_{i=1} c_i$.
Note that $c_i$'s are random variables depending on the ordering of data points, but their sum, $\mathrm{OPT}_n$, is deterministic.
Then, we have $\mathrm{OPT}_t \leq  \sum^t_{i=1} c_i$ because $\mathrm{OPT}_t$ minimizes the loss up to time $t$, 
Hence, we have
\begin{align*}
    & \E_{\{x_i\}}\left[\sum^{n}_{t=1}\frac{\mathrm{OPT}_t}{t}\right]
    \leq \E_{\{x_i\}}\left[\sum^{n}_{t=1}\frac{\sum^t_{i=1} c_i}{t}\right]
    =  \E_{\{x_i\}}\left[\sum^{n}_{i=1} c_i\sum^{n}_{t=i} \frac{1}{t}\right]
    = \sum^{n}_{i=1} \E_{\{x_i\}}[c_i] \sum^{n}_{t=i} \frac{1}{t}\\
    & = \frac{\mathrm{OPT}_{n}}{n} \cdot \sum^{n}_{i=1} \sum^{n}_{t=i} \frac{1}{t}
    = \frac{\mathrm{OPT}_{n}}{n} \cdot n
    =  \mathrm{OPT}_n \,.
\end{align*}
Therefore, we have
\[
    \mathbb{E}_{\mathcal{A},\{x_i\}}\left[\sum^{n}_{t=1} \ell(\theta_{t}, x_{t})\right] - (1+\epsilon)\mathrm{OPT}_{n} 
    = O \left(\sum^n_{t=1}\beta(t) + 1\right)\,.
    \qedhere
\]
\end{proof}







\section{Approximation Algorithm with Low Average Sensitivity via Coreset}\label{sec:coreset}

To design approximation algorithms for the offline learning problem with low average sensitivity, we consider the following approach:
We first construct a small subset of the input that well preserves objective functions, called a coreset, with small average sensitivity, and then apply any known approximation algorithm on the coreset. 
Coreset is formally defined as follows:
\begin{defn}[\citet{har2004coresets,agarwal2005geometric}]
Let $\ell: \Theta \times \mathcal{X} \rightarrow \mathbb{R}_{+}$ be a loss function and let $X \in \mathcal{X}^n$. For $\epsilon>0$, we say that a weighted set $\left(Y, w\right)$ with $Y \subseteq X$ and $w: Y \to \mathbb{R}_+$ is an $\epsilon$-coreset of $X$ with respect to $\ell$ if for any $\theta \in \Theta$, we have
$
\sum_{y \in Y} w(y) \ell(\theta, y) \in(1 \pm \epsilon) \sum_{x \in X} \ell(\theta, x)  
$.
\end{defn}





Now, we consider a popular method for constructing coresets based on importance sampling and show that it enjoys a low average sensitivity.
For a data $x \in X$, its \emph{sensitivity} $\sigma_X(x)$\footnote{The reader should not confuse sensitivity, which is a measure for data points, with average sensitivity, which is a measure for algorithms.} is its maximum contribution to the loss of the whole dataset, or more formally
\begin{align} \label{eq:importance}
    \sigma_X(x) = \sup_{\theta \in \Theta} \frac{\ell(\theta, x)}{\ell(\theta, X)}\,.
\end{align}

\begin{algorithm}[htb]
\DontPrintSemicolon
  \KwInput{Loss function $\ell:\Theta \times \mathcal{X} \to \mathbb{R}_+$, dataset $X \in \mathcal{X}^n$, $m \in \mathbb{N}$, and $\epsilon >0$}
  For each $x \in X$, compute $\sigma_X(x)$ and set $p(x) = \sigma_X(x)/\sum_{x' \in X} \sigma_X(x')$.\;
  Let $S$ be an empty set.\;
  \For{$i = 1, \ldots, m$}{
  Sample $x$ with probability $p(x)$.\;
  Sample $\tilde{p}$ from $[p(x), (1 + \epsilon/2)p(x)]$ uniformly at random. \;
  \If{$w(x)$ is undefined}{
    $S \gets S \cup \{x\}$.\;
    $w(x) \gets 1/\tilde{p}$.
  }\Else{
    $w(x) \gets w(x)+1/\tilde{p}$.
  }
  \Return $(S,w)$.
  }
\caption{Coreset Construction Based on Sensitivity Sampling}\label{alg:coreset}
\end{algorithm}

It is known that we can construct a coreset as follows:
A data point $x \in X$ is sampled with probability $p(x) := \sigma_X(x) / \sum_{x' \in X} \sigma_X(x')$, and then its weight in the output coreset is increased by $1/\tilde{p}$, where $\tilde{p}$ is a slight perturbation of $p(x)$.
This process is to be repeated for a fixed number of times, where the exact number depends on the approximation ratio of the coreset. 
See Algorithm~\ref{alg:coreset} for details.
We can bound its average sensitivity as follows:
\begin{restatable}[]{lem}{asis}\label{lem:asis}
\label{lem:as_is}
The average sensitivity of Algorithm~\ref{alg:coreset} is $O\left(\epsilon^{-1}m/n\right)$. 
\end{restatable}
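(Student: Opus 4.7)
The plan is to exhibit an explicit coupling between Algorithm~\ref{alg:coreset} run on $X$ and on $X^{(i)}$ by sharing randomness across the $m$ independent sampling rounds, bound the per-round total variation distance between the two output distributions, sum over rounds, and then average over $i$. Write $p_X$ for the sampling distribution on $X$ and $p_{X^{(i)}}$ for the sampling distribution on $X^{(i)}$. Using $\ell(\theta, X^{(i)}) = \ell(\theta, X) - \ell(\theta, x_i)$ and the definition of $\sigma$, I would first establish ratio bounds $\sigma_{X^{(i)}}(x) \in [\sigma_X(x), \sigma_X(x)/(1 - \sigma_X(x_i))]$ for $x \neq x_i$, and use these to argue that $p_{X^{(i)}}(x) = (1 \pm O(p_X(x_i)))\, p_X(x)/(1 - p_X(x_i))$, whence $\mathrm{TV}(p_X, p_{X^{(i)}}) = O(p_X(x_i))$ when $p_{X^{(i)}}$ is extended by zero at $x_i$.

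For a single iteration, the output is a pair $(x, \tilde p)$. I would couple the sampled index under a maximal coupling of $p_X$ and $p_{X^{(i)}}$, so that both runs pick the same $x \neq x_i$ except on an event of probability $O(p_X(x_i))$. Conditional on the same $x$, the two perturbed weights are uniform on the intervals $[p_X(x), (1+\epsilon/2) p_X(x)]$ and $[p_{X^{(i)}}(x), (1+\epsilon/2) p_{X^{(i)}}(x)]$ respectively. A direct computation of the TV between two uniform distributions on such shifted intervals of width $\Theta(\epsilon p_X(x))$ gives a bound of
\[
    O\!\left(\frac{|p_X(x) - p_{X^{(i)}}(x)|}{\epsilon \min(p_X(x), p_{X^{(i)}}(x))}\right) = O(p_X(x_i)/\epsilon),
\]
so the per-iteration TV is $O(p_X(x_i)/\epsilon)$.

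Since the $m$ iterations are drawn independently, subadditivity of TV under product couplings (a union bound over which iteration first disagrees) gives $\mathrm{TV}(\mathcal{A}(X), \mathcal{A}(X^{(i)})) = O(m p_X(x_i)/\epsilon)$. Averaging over $i \in [n]$ uses the identity $\sum_{i=1}^n p_X(x_i) = 1$, which yields the claimed bound $O(\epsilon^{-1} m / n)$. The main obstacle is the ratio analysis in the first step: the suprema defining $\sigma_X(x)$ and $\sigma_{X^{(i)}}(x)$ need not be attained at the same $\theta$, and when $\sigma_X(x_i)$ is close to $1$ the pointwise ratio $\sigma_{X^{(i)}}(x)/\sigma_X(x)$ can blow up. I expect to resolve this either by a per-$\theta$ argument that transfers the bound $\ell(\theta, X^{(i)})/\ell(\theta, X) \geq 1 - \sigma_X(x_i)$ into a uniform ratio bound on sensitivities, or by combining the nontrivial bound with the trivial bound $\mathrm{TV} \leq 1$ in the regime where $m p_X(x_i)/\epsilon \gtrsim 1$, since the $1/n$ factor from averaging still absorbs the loss.
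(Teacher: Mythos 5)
Your overall architecture matches the paper's proof: split the per-iteration total variation into (a) the choice of sampled element and (b) the perturbed weight conditioned on that element, bound (b) by the TV between uniform distributions on the two shifted intervals (this is exactly the paper's Lemma~\ref{lem:tv} and is where the $1/\epsilon$ enters), multiply by $m$ via subadditivity over the independent rounds, and average over $i$ using $\sum_{i} p_X(x_i)=1$ (the paper packages this last step as the aggregate bound $\sum_{i}\sum_{x}\bigl|p_X(x)-p_{X^{(i)}}(x)\bigr|\le 2$ in Lemma~\ref{lem:as_3}). The plan and the final bookkeeping are right.

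The gap is in the step you flag yourself: the claim $\mathrm{TV}(p_X,p_{X^{(i)}})=O(p_X(x_i))$. The per-$\theta$ argument does give $\sigma_{X^{(i)}}(x)\le\sigma_X(x)/(1-\sigma_X(x_i))$, but this is a \emph{multiplicative} bound with factor $1/(1-\sigma_X(x_i))$, and $\sigma_X(x_i)$ can be arbitrarily close to $1$ while $p_X(x_i)=\sigma_X(x_i)/\sum_{x'}\sigma_X(x')$ stays tiny, because the total sensitivity $\sum_{x'}\sigma_X(x')$ can be large (it is $\Theta(k)$ for clustering, for instance). In exactly that regime your fallback of charging the trivial bound $\mathrm{TV}\le 1$ when $m p_X(x_i)/\epsilon\gtrsim 1$ does not apply, since $m p_X(x_i)/\epsilon$ can still be far below $1$, yet the ratio bound gives you nothing. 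What is actually needed is the \emph{additive} perturbation bound: with $\theta_x$ the maximizer defining $\sigma_{X^{(i)}}(x)$, one has $0\le\sigma_{X^{(i)}}(x)-\sigma_X(x)\le\frac{\ell(\theta_x,x)}{\ell(\theta_x,X^{(i)})}\cdot\frac{\ell(\theta_x,x_i)}{\ell(\theta_x,X)}$ (the paper's Lemma~\ref{lem:as_1}); summing this over $x$ simultaneously controls the change in the numerators and the shift in the normalizer $\sum_{x'\in X^{(i)}}\sigma_{X^{(i)}}(x')$ (Lemma~\ref{lem:as_2}), and together with $\sum_{x'\in X^{(i)}}\sigma_{X^{(i)}}(x')\ge 1$ this yields the required $\ell_1$ bound on the sampling probabilities (Lemma~\ref{lem:as_3}). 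Without replacing your ratio analysis by an additive argument of this form, the per-iteration TV bound, and hence the whole proof, does not close.
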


A general bound on the number of times we need to repeat the process, i.e., $m$ in Algorithm~\ref{alg:coreset}, to obtain an $\epsilon$-coreset is known (see, e.g., Theorem 5.5 of~\cite{braverman2016new}).
However, we do not discuss it here because better bounds are known for specific problems and we do not use the general bound in the subsequent sections.

\section{Online $(k,z)$-Clustering}\label{sec:clustering}
In online applications, unlabelled data are abundant and their structure can be essential, and clustering serves as an important tool for analyzing them. 
In this section, as an application of our general batch-to-online transformation, we describe an online $(k,z)$-clustering method that enjoys low regret. 


\subsection{Problem setup}
The online $(k,z$)-clustering problem~\citep{cohen2021online} is an instance of the general online learning problem described in Section~\ref{sec:2}. 
We describe the problem as follows:
Let $k \geq 1$ be an integer and $z \geq 1$ be a real value.
Over a time horizon $n$, at each time step $t$, a data point $x_t \in \mathbb{R}^d$ is given. Using the set of data points $X_{t-1} = \{x_1, \ldots, x_{t-1}\}$, we are asked to compute a set $Z_t=\left\{z_1, \ldots, z_k\right\}$ of $k$ points in $\mathbb{R}^d$ that minimize
$\ell\left(Z_t, x_t\right):=\min _{j=1, \ldots, k}\left\|x_t-z_j\right\|_2^{z}$,
which is the $z$-th power of the Euclidean distance between $x_t$ and the closest point in $Z_t$. 
Note that $Z_t$ plays the role of $\theta_t$ in the general online learning problem.
The regret and $\epsilon$-approximate regret are defined accordingly.

\subsection{Method and results} 
One important ingredient to our method is the coreset construction method proposed by \cite{huang2020coresets}. The method provides a unified two-stage importance sampling framework, which allows for a coreset with a size that is dimension independent. Specifically, the method constructs an $\epsilon$-coreset of size $\tilde{O}\left(\min \left\{\varepsilon^{-2 z-2} k, 2^{2 z} \epsilon^{-4} k^2\right\}\right)$ in $\Tilde{O}(ndk)$ time, where the $\Tilde{O}$ hides polylogarithmic factors in $n$ and $k$. 
We remark that the importance of sampling steps in the framework is similar to the ones described in Section~\ref{sec:coreset}, which thus allows us to analyze its average sensitivity.

Algorithm \ref{alg:clustering} gives a brief description of our algorithm, while a detailed description is presented in the appendix. The algorithm adheres to the standard transformation approach, whereby an offline approximation algorithm is run on the coreset derived from the aggregated data.

\begin{algorithm}[htb]
\DontPrintSemicolon
  \KwInput{Offline algorithm $\mathcal{A}$ for $(k, z)$-clustering, approximation ratio $1 + \epsilon$, $\epsilon \in (0,1)$.}
  $\epsilon' \gets \epsilon/3$.\;
  \For{$t = 1, \ldots, n$}{
  Construct an $\epsilon'$-coreset $C_{t-1} = (S_{t-1}, \omega_{t-1})$ on $X_{t-1}$.\; 
  Obtain a cluster set $Z_t$ by running a PTAS $\mathcal{A}$ with approximation ratio of $(1+\epsilon')$ on $C_{t-1}$. \;
  Receive $x_t \in \mathbb{R}^d$ and $\ell(Z_t, x_t) \in \mathbb{R}_+$.
  }
\caption{Online consistent $(k,z)$-clustering}\label{alg:clustering}
\end{algorithm}



\begin{restatable}[]{thm}{regretcluster}
\label{thm:regret_cluster}
For any $\epsilon \in (0,1)$, Algorithm~\ref{alg:clustering}  gives a regret bound of  
    \begin{align*}
       \mathrm{Regret}_\epsilon(n) 
        \leq O \left( \left({(168 z)^{10 z} \epsilon^{-5 z-15} k^5 \log \frac{kn}{\epsilon}+  \epsilon^{-2 z-2} k \log k \log \frac{kn}{\epsilon}}\right)\log n \right)  \,.
    \end{align*}
    Moreover, there exists an algorithm that enjoys the same regret bound and an inconsistency bound of $\mathrm{Inconsistency}(n) = O \left( \left({(168 z)^{10 z} \epsilon^{-5 z-15} k^5 \log(\epsilon^{-1}kn)  +  \epsilon^{-2 z-2} k \log k \log(\epsilon^{-1}kn)}\right)\log n \right)$ for $(k,z)$-clustering.
\end{restatable}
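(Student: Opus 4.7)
The plan is to apply Theorem~\ref{thm:regret} directly to Algorithm~\ref{alg:clustering}, so the first task is to verify its two hypotheses: the per-step procedure is a $(1+\epsilon)$-approximation for the offline $(k,z)$-clustering problem, and it has a quantitative bound on its average sensitivity. For the approximation guarantee, since $C_{t-1}$ is an $\epsilon'$-coreset of $X_{t-1}$ with $\epsilon' = \epsilon/3$, any solution whose weighted cost on $C_{t-1}$ is within $(1+\epsilon')$ of optimum has cost on $X_{t-1}$ within $(1+\epsilon')^2/(1-\epsilon') \leq 1+\epsilon$ of the true optimum, so $\mathcal{A}$ run on $C_{t-1}$ is $(1+\epsilon)$-approximate for $X_{t-1}$.

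For the sensitivity, the plan is to appeal to the data processing inequality for total variation distance: the cluster set $Z_t$ is a (possibly randomized) post-processing of the coreset $C_{t-1}$, so the TV distance between the distributions of $Z_t$ under $X_{t-1}$ and $X_{t-1}^{(i)}$ is bounded by the TV distance between the coreset distributions. Hence it suffices to bound the average sensitivity of the coreset construction. The two-stage importance sampling of Huang et al.\ ultimately reduces to a sensitivity-sampling step of the form in Algorithm~\ref{alg:coreset}, so Lemma~\ref{lem:asis} applies and gives average sensitivity $O(\epsilon^{-1} m(t)/t)$, where $m(t)$ is the coreset size needed for an $\epsilon'$-coreset. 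Plugging in Huang et al.'s size bound yields
\[
   m(t) = O\Bigl((168z)^{10z}\epsilon^{-5z-15} k^5 \log(kn/\epsilon) + \epsilon^{-2z-2} k \log k \log(kn/\epsilon)\Bigr),
\]
which is essentially independent of $t$.

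Invoking Theorem~\ref{thm:regret} then gives $\mathrm{Regret}_\epsilon(n) = O\bigl(\sum_{t=1}^n \beta(t) + 1\bigr) = O(m \log n)$, matching the stated bound after absorbing constant factors. For the inconsistency claim, the plan is to couple the coreset samplers across consecutive time steps so that $C_t$ agrees with $C_{t-1}$ except on the coordinates affected by the arrival of $x_t$; the expected number of disagreements between the resulting solutions is then bounded by the same TV quantity, and summing $\beta(t)$ produces the advertised inconsistency. This is the standard ``low average sensitivity implies low inconsistency'' recipe in the random-order model.

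The main obstacle will be carrying the average-sensitivity analysis through the two-stage sampling of Huang et al.\ rather than the single-stage Algorithm~\ref{alg:coreset}. The first stage computes a coarse bicriteria clustering that determines the sampling distribution, and one must show that this preliminary clustering is itself stable under deletion of an input point and that the induced perturbation in the sampling probabilities translates into only a mild increase in the TV distance of the final weighted set. The analysis amounts to chaining couplings across the two stages; once this is done, the coreset-size factor $m$ dominates and the $\log n$ comes from the harmonic sum $\sum_{t=1}^n 1/t$ as in the proof of Theorem~\ref{thm:regret}.
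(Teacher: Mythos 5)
Your outline follows the paper's proof almost exactly: verify the $(1+\epsilon)$-approximation by composing the coreset guarantee with the $(1+\epsilon')$-approximate solver, bound the average sensitivity of the coreset construction, feed $\sum_t \beta(t)$ into Theorem~\ref{thm:regret} to pick up the $\log n$ from the harmonic sum, and convert average sensitivity into inconsistency via a probability transportation/coupling argument (the paper cites Lemma~4.2 of \citet{yoshida2022average} for this last step). Two points, however, deserve attention.

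First, your approximation argument is conditional on the coreset construction succeeding, but Theorem~\ref{thm:regret} needs the \emph{expected} approximation ratio to be $1+\epsilon$. Algorithm~\ref{alg:clustering_core} only produces an $\epsilon'$-coreset with probability $1-\delta$, and on the failure event the per-step loss can be as large as $t$. The paper handles this explicitly by bounding the expected ratio as $(1-\delta)(1+\epsilon/3)^2 + \delta n \leq 1+\epsilon$ and choosing $\delta = O(\epsilon/n)$; this choice is also what puts the $\log(kn/\epsilon)$ into the coreset size. You implicitly use $\delta = \epsilon/n$ when you write $\log(kn/\epsilon)$ in $m(t)$, but you never fold the failure probability into the approximation ratio, so as written the hypothesis of Theorem~\ref{thm:regret} is not verified.

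Second, you correctly identify the crux --- pushing the average-sensitivity analysis through the two-stage sampling of \citet{huang2020coresets}, including the stability of the preliminary $D^z$-sampling bicriteria solution --- but you leave it as an ``obstacle'' rather than resolving it. This is exactly the content of the paper's Theorem~\ref{thm:as_clustering}: the $D^z$-sampling step contributes $O(k/n)$ to the average sensitivity (by Lemma~2.2 of \citet{yoshida2022average}), and each importance-sampling stage contributes $O(N_i/n)$ by the machinery behind Lemma~\ref{lem:asis}, for a total of $O((k+N_1+N_2)/n)$. Also note that the paper additionally verifies that the \emph{perturbed} weights (which are what make the sampler insensitive) still yield a valid $\epsilon$-coreset, since the weights are only scaled by at most $1+\epsilon$; this check is missing from your plan but is needed for the approximation half of the argument. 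Without these pieces your proof is an accurate roadmap of the paper's argument rather than a complete proof.
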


\begin{rem}
When $z = 2$, previous results for the adversarial setting show an $\epsilon$-approximate regret bound of $O(k \sqrt{d^3 n} \log(\epsilon^{-1}dkn))$ \citep{cohen2021online}. In comparison, although our regret is for the random-order model, our method and results accommodate a range of values for $z$, and the regret bound is only polylogarithmically dependent on $n$ and is independent of the dimension $d$. 
\end{rem}




\section{Online Low-Rank Matrix Approximation}\label{sec:matrix}
Low-rank matrix approximation serves as a fundamental tool in statistics and machine learning. The problem is to find a rank-$k$ matrix that approximates an input matrix $\rmA \in \sR^{d \times n}$ as much as possible. 
In this section, we apply the transformation framework to the offline approximation algorithm to obtain a low regret online algorithm. 

\subsection{Problem setup}
\paragraph{Low-rank matrix approximation}
By the singular value decomposition (SVD), a rank-$r$ matrix $\rmA \in \sR^{d \times n}$ can be decomposed as $\rmA = \rmU \mathbf{\Sigma}\rmV^\top$, where $\rmU \in \sR^{d \times r}$ and $\rmV \in \sR^{n \times r}$ are orthonormal matrices, $\mathbf{\Sigma} \in \mathbb{R}^{r \times r}$ is a diagonal matrix with $\rmA$'s singular values on the diagonal. The best rank-$k$ approximation of $\rmA$ is given by 
\begin{align*}
    \rmA_k =  \rmU_k \mathbf{\Sigma}_k\rmV_k^\top = \argmin_{\rmB \in \mathbb{R}^{d \times n}: \mathrm{rank}(\rmB)\leq k} \left\|\rmA - \rmB\right\|_{F} \,,
\end{align*}
where $\|\cdot \|_F$ denotes the Frobenius norm, $\mathbf{\Sigma}_k \in \sR^{k \times k}$ is a diagonal matrix with $\rmA_k$'s top $k$ singular values on the diagonal, and $\rmU_k \in \sR^{d \times k}$ and $\rmV_k \in \sR^{n \times k}$ are orthonormal matrices obtained from $\rmU$ and $\rmV$, respectively, by gathering corresponding columns.
The best rank-$k$ approximation can also be found by projecting $\rmA$ onto the span of its top $k$ singular vectors, that is, $\rmA_k = \rmU_k \rmU_k^\top \rmA$. Then, we can say an orthonormal matrix $\rmZ$ is an $\epsilon$-approximate solution if 
\begin{align*}
    \left\|\rmA - \rmZ \rmZ^\top \rmA\right\|_F
    \leq (1 + \epsilon)\left\|\rmA - \rmU_k \rmU_k^\top  \rmA\right\|_F \,.
\end{align*}

The matrix approximation problem serves as an important tool in data analytics and is closely related to numerous machine learning methods such as principal component analysis and least squares analysis. When dealing with streaming data, the online version of the matrix approximation problem becomes a vital tool for designing online versions of the machine learning algorithms mentioned above.
\paragraph{Online matrix approximation}
Through a time horizon of $n$, we receive a column of $\rmA$, $a_t \in \sR^d$ at each time step $t$.
We are then asked to compute $\rmZ_t \in \sR^{d \times k}$ that minimizes 
\begin{align*}
    \ell(\rmZ_t, a_t) =  \left\|a_t - \rmZ_t \rmZ_t^\top a_t\right\|_F\,.
\end{align*}
Without loss of generality, we will assume that the losses are bounded between $[0,1]$. We remark that similar assumptions are also made in \citet{nie2016online}. 

The online matrix approximation problem serves as a core component of online machine learning algorithms such as principle component analysis. These algorithms are important to a range of applications, such as online recommendation systems and online experimental design \citep{warmuth2008randomized,nie2016online}.  
\subsection{Method and results}
In the context of low-rank matrix approximation, the coreset of a matrix is called the projection-cost preserving samples, defined as follows: 
\begin{defn}[Rank-$k$ Projection-Cost Preserving Sample \cite{cohen2017input}]\label{def:pc_samples}
For $n^{\prime}<n$, a subset of rescaled columns $\mathbf{C} \in \mathbb{R}^{ d \times n^\prime }$ of $\mathbf{A} \in \mathbb{R}^{d \times n}$ is a $(1+\epsilon)$ projection-cost preserving sample if, for all rank-$k$ orthogonal projection matrices $\mathbf{X} \in \mathbb{R}^{d \times d}$,
$
(1-\epsilon)\|\rmA-\mathbf{X} \rmA\|_F^2 \leq\|\mathbf{C}-\mathbf{X} \mathbf{C}\|_F^2 \leq(1+\epsilon)\|\rmA-\mathbf{X} \rmA\|_F^2 
$.
\end{defn}

Such sketches that satisfy Definition \ref{def:pc_samples} can be constructed via importance sampling-based routines, which are modifications of the ``leverage scores''.
Specifically, for the $i$-th column $a_{i}$ of matrix $A$, the \emph{ridge leverage score} is defined as $\tau_i (\rmA)=a_{i}^\top\left(\rmA \rmA^\top+\frac{\left\|\rmA-\rmA_k\right\|_F^2}{k} \mathbf{I}\right)^{\dagger} a_{i}$, where $\dagger$ denotes the Moore-Penrose pseudoinverse of a matrix \citep{cohen2017input}. 

Now, we introduce our online matrix approximation algorithm in Algorithm~\ref{alg:matrix_approx}, which builds upon our transformation framework. 
It computes the approximation of the matrix from the sketch derived from the aggregated matrix using ridge leverage scores.
 
\begin{algorithm}[htb]
\DontPrintSemicolon
  \KwInput{Approximation parameters $\epsilon \in (0,1)$.}
  Set $\delta = O(\epsilon/n)$ and $m = O \left(\epsilon^{-2}k \log (\delta^{-1}k)\right)$. \;
  \For{$t = 1, \ldots, n$}{
  Construct $\rmA_{t-1} \in \sR^{d \times (t-1)}$ by concatenating $a_1, \ldots a_{t-1}$.\;
  Let $\rmC_{t-1} \in \sR^{d \times m}$ be the zero matrix. \;
  \For{$j = 1,\ldots,m$}{
    Sample the $i$-th column $a_i \in \mathbb{R}^d$ of $\rmA_{t-1} $ with probability $p_i :=\frac{\tau_i(\rmA_{t-1})}{\sum_{j=1}^{t-1} \tau_j(\rmA_{t-1})}$. \;
    Sample $w \in \mathbb{R}$ uniformly from $[1/\sqrt{t p_i}, (1+\epsilon)/\sqrt{t p_i}]$.\;
    Replace the $j$-th column of $\rmC_{t-1}$ with $w \cdot a_i$. 
  }
  Set $\rmZ_t \in \mathbb{R}^{d \times k}$ to the top $k$ left singular vectors of $\rmC_t$ \;
  Receive $a_t \in \mathbb{R}^d$ and $\ell(\rmZ_t, a_t) \in \mathbb{R}_+$.\;
  }
\caption{Online low rank matrix approximation}\label{alg:matrix_approx}
\end{algorithm}

\begin{restatable}[]{thm}{regretmatrix}
\label{thm:regretmatrix}
For any $\epsilon \in (0,1)$, Algorithm \ref{alg:matrix_approx} has regret $\mathrm{Regret}_\epsilon(n) = O \left(\epsilon^{-2}k \log n\log(\epsilon^{-1}kn)\right) $. Moreover, there exists an algorithm for online low-rank matrix approximation that enjoys the same regret bound and an inconsistency bound of $\mathrm{Inconsistency}(n) = O \left(\epsilon^{-2}k\log n \log(\epsilon^{-1}kn) \right) $.
\end{restatable}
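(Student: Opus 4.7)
The plan is to derive Theorem~\ref{thm:regretmatrix} as an instance of the general framework: I will verify that the single-step routine producing $\rmZ_t$ from $\rmA_{t-1}$ is a $(1+O(\epsilon))$-approximate offline algorithm with average sensitivity $\beta(t) = O(m/t)$, where $m = O(\epsilon^{-2} k \log(\epsilon^{-1} k n))$. Plugging these into Theorem~\ref{thm:regret} then gives $\mathrm{Regret}_\epsilon(n) = O\bigl(\sum_{t=1}^n \beta(t)\bigr) = O(m \log n) = O(\epsilon^{-2} k \log n \log(\epsilon^{-1} k n))$, which is the claimed bound after rescaling $\epsilon$ by a constant.

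For the approximation claim I would invoke the ridge-leverage-score result of \citet{cohen2017input}: sampling $m = O(\epsilon^{-2} k \log(\delta^{-1} k))$ columns with probabilities proportional to $\tau_i(\rmA_{t-1})$ and weighting them by $1/\sqrt{t p_i}$ yields, with failure probability at most $\delta$, a rank-$k$ projection-cost preserving sample $\rmC_{t-1}$ in the sense of Definition~\ref{def:pc_samples}. Consequently the top-$k$ left singular subspace $\rmZ_t$ of $\rmC_{t-1}$ attains loss within a $(1+\epsilon)$ factor of the optimum on $\rmA_{t-1}$; choosing $\delta = O(\epsilon/n)$ controls the expected approximation ratio at $1 + O(\epsilon)$ after a union bound over the horizon, and absorbs the $[0,1]$-bounded-loss contribution from the failure event. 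A short additional calculation is needed to handle the weight perturbation $w \in [1/\sqrt{tp_i}, (1+\epsilon)/\sqrt{tp_i}]$, but since each entry of $\rmC_{t-1}^\top\rmC_{t-1}$ is multiplied by a factor in $[1, (1+\epsilon)^2]$, the projection-cost preservation degrades only by another $1+O(\epsilon)$ factor.

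For the sensitivity claim I would adapt the argument behind Lemma~\ref{lem:asis}. The sampling loop in Algorithm~\ref{alg:matrix_approx} is structurally identical to Algorithm~\ref{alg:coreset}, with $\tau_i(\rmA_{t-1})$ playing the role of $\sigma_X(x)$, so I would couple the two runs on $\rmA_{t-1}$ and $\rmA_{t-1}^{(i)}$ draw-by-draw, using the optimal transport between the categorical distributions $p$ and $p^{(i)}$ for each draw. The key ingredient is stability of ridge leverage scores under single-column deletion: removing one column changes each score and the total by bounded multiplicative factors, so $\mathbb{E}_i[\mathrm{TV}(p, p^{(i)})] = O(1/t)$. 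A union bound over the $m$ independent draws (together with the perturbation coupling on $\tilde p$ from the proof of Lemma~\ref{lem:asis}) gives $\mathbb{E}_i[\mathrm{TV}(\rmC_{t-1}, \rmC_{t-1}^{(i)})] = O(m/t)$, and since $\rmZ_t$ is a deterministic function of $\rmC_{t-1}$, the data-processing inequality preserves this to $\beta(t) = O(m/t)$.

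The main obstacle I anticipate is producing this coupling cleanly and verifying the leverage-score stability quantitatively; once that is done, the regret bound is an immediate application of Theorem~\ref{thm:regret}, and the inconsistency bound follows by the standard reduction from average sensitivity to inconsistency in the random-order model (in the spirit of \cite{yoshida2022average}): at each step, sample $\rmC_t$ via the coupling described above against the previous round so that it equals $\rmC_{t-1}$ whenever the coupling succeeds, and only recompute $\rmZ_t$ otherwise. The expected number of recomputations is then $O(\sum_{t=1}^n \beta(t))$, yielding an inconsistency bound of the same order as the regret.
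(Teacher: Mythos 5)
Your proposal follows essentially the same route as the paper: invoke the projection-cost-preservation guarantee of \citet{cohen2017input} for the approximation ratio (with $\delta = O(\epsilon/n)$ absorbing the failure event), bound the per-step average sensitivity by $O(m/t)$ via the sensitivity-sampling argument of Lemma~\ref{lem:asis}, sum and plug into Theorem~\ref{thm:regret}, and obtain inconsistency from the average-sensitivity-to-inconsistency reduction of \cite{yoshida2022average}. If anything, you are more explicit than the paper about two steps it glosses over --- the effect of the weight perturbation on the projection-cost guarantee and the need to verify stability of ridge leverage scores under column deletion before reusing the Lemma~\ref{lem:asis} argument --- but the underlying argument is the same.
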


\begin{rem}
The online matrix approximation with the random-order setting has previously been investigated in the context of principle component analysis by \citet{garber2020online}. They established a regret of $O \left(\zeta^{-1} \sqrt{kn}\right)$, where $\zeta$ is the smallest difference between two eigenvalues of $\rmA_t \rmA_t^\top$. In contrast, our result gives a polylogarithmic result on $\epsilon$-regret,  which translate to an exact regret of $O\left(\epsilon \mathrm{OPT} + O \left( \epsilon^{-2}k\log n\log(\epsilon^{-1}kn) \right)\right)$, with $\mathrm{OPT}$ being the minimum possible cumulative loss attained by the hindsight best approximate. 
\end{rem}


\section{Online Regression}\label{sec:regression}

In the online regression problem, at each time step $t\in [n]$, we are asked to output a vector $x_t \in \mathbb{R}^d$, and then we receive vectors $a_t \in \mathbb{R}^d$ and $b_t \in \sR$ that incurs the loss of $\ell(x_t, a_t , b_t) = \|a_t^\top x_t - b\|_2$. Without loss of generality, we assume that the losses are bounded between $[0,1]$. We note that similar assumptions are also made in \citep{cesa1996worst,ouhamma2021stochastic}.

With our general reduction framework, we show an $\epsilon$-regret upper bound as follows.
\begin{restatable}[]{thm}{regression}\label{thm:regression}
For any $\epsilon \in (0,1)$, Algorithm \ref{alg:regression} has regret $\mathrm{Regret}_\epsilon(n) = O\left(\epsilon^{-2}d \log n \log (\epsilon^{-1}dn)\right)$.
Moreover, there exists an algorithm for online regression that enjoys the same regret bound and an inconsistency bound of $\mathrm{Inconsistency}(n) = O\left(\epsilon^{-2} d \log n \log (\epsilon^{-1}dn)\right)$.
\end{restatable}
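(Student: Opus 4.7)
The plan is to instantiate the batch-to-online transformation of Theorem~\ref{thm:regret}, exactly in parallel with Section~\ref{sec:matrix}. Algorithm~\ref{alg:regression} should, at each time step $t$, build a small coreset of the accumulated regression samples $(a_1,b_1),\ldots,(a_{t-1},b_{t-1})$ via a leverage-score importance sampling scheme modelled on Algorithm~\ref{alg:coreset}, and then run any offline $(1+\epsilon)$-approximate least-squares solver (e.g., direct SVD) on that coreset to produce $x_t$. Since the per-point loss $\ell(x,a,b) = \|a^\top x - b\|_2$ is a nonnegative, bounded loss in the form required by Section~\ref{sec:2}, the transformation framework applies verbatim once we control the average sensitivity.

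\textbf{Key steps.} First, form the aggregated data matrix $\rmA_{t-1}\in\sR^{(t-1)\times d}$ with rows $a_i^\top$ and the right-hand side $b_{t-1}\in\sR^{t-1}$, and let $\tau_i$ be the ridge leverage score of the $i$-th row of the augmented matrix $[\rmA_{t-1}\mid b_{t-1}]$; set $p_i=\tau_i/\sum_j\tau_j$. Sampling $m=O(\epsilon^{-2}d\log(\epsilon^{-1}dn))$ indices with probabilities $p_i$, reweighting each sampled pair by $1/\sqrt{t\tilde p_i}$ with $\tilde p_i$ drawn uniformly from $[p_i,(1+\epsilon/2)p_i]$ as in Algorithm~\ref{alg:coreset}, yields a weighted subset $(\tilde\rmA,\tilde b)$ that is an $\epsilon$-coreset for regression: for every $x\in\sR^d$, $\|\tilde\rmA x-\tilde b\|_2^2\in(1\pm\epsilon)\|\rmA_{t-1}x-b_{t-1}\|_2^2$. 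This is a standard subspace-embedding result for ridge-leverage sampling, with the log factor absorbing the polynomial failure probability $\delta=O(\epsilon/n)$ per step. Second, Lemma~\ref{lem:asis} bounds the per-step average sensitivity of this coreset construction by $O(m/t)$, and since the offline solver is a deterministic post-processing it cannot increase total variation; hence the composed offline algorithm has average sensitivity $\beta(t)=O(m/t)$.

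\textbf{Plugging in Theorem~\ref{thm:regret}.} Summing telescopically, $\sum_{t=1}^n\beta(t)=O(m\log n)=O(\epsilon^{-2}d\log n\log(\epsilon^{-1}dn))$, which by Theorem~\ref{thm:regret} is precisely the claimed $\epsilon$-approximate regret bound. For the inconsistency statement, one couples consecutive outputs via the probability-transportation technique of \cite{yoshida2022average}: one computes an optimal coupling of the coreset distributions at times $t$ and $t+1$, and on the event that the sampled multisets agree, the offline solver is re-run with identical internal randomness so that $x_{t+1}=x_t$. The expected number of disagreements is then dominated by $\sum_t\beta(t)$, matching the regret bound.

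\textbf{Main obstacle.} The main technical point is verifying that the perturbed weights in $[1/\sqrt{tp_i},(1+\epsilon)/\sqrt{tp_i}]$---needed only so that nearby sensitivity distributions over $X$ and $X^{(i)}$ can be coupled with total variation $O(m/n)$ per Lemma~\ref{lem:asis}---degrade the coreset's approximation ratio by at most a constant factor of $\epsilon$, so that a final rescaling of $\epsilon$ by a constant yields the stated $(1+\epsilon)$ guarantee. Once this is confirmed, all remaining ingredients follow from machinery already developed in Sections~\ref{sec:framework} and~\ref{sec:coreset}, together with the analogy to the matrix-approximation argument in Section~\ref{sec:matrix}.
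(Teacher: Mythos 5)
Your proposal is correct and follows essentially the same route as the paper: leverage-score sampling with perturbed weights to form a per-step sketch, Lemma~\ref{lem:asis} to bound its average sensitivity by $O(m/t)$, a harmonic sum giving $\sum_t \beta(t) = O(m\log n)$, Theorem~\ref{thm:regret} to convert this into the stated $\epsilon$-approximate regret, and the probability-transportation argument of \cite{yoshida2022average} for the inconsistency bound. The only substantive deviation is that you sample by (ridge) leverage scores of the augmented matrix $[\rmA_{t-1}\mid b_{t-1}]$, whereas Algorithm~\ref{alg:regression} uses the plain leverage scores of $\rmA_{t-1}$ and invokes Theorem~\ref{thm:subspace_regression}; your choice is if anything the more careful one, since an $\epsilon$-subspace embedding of $\rmA_{t-1}$ alone does not by itself preserve $\|\rmA_{t-1}x-b_{t-1}\|_2$ multiplicatively.
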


\begin{rem}
In the stochastic setting, the online regression problem has been extensively investigated \citep{foster1991prediction,littlestone1995line,cesa1996worst,ouhamma2021stochastic}. Using online ridge regression or forward algorithms, the regret is shown to be $O \left(d \log n\right)$. In the random-order model setting, \citet{garber2020online,sherman2021optimal} give $O(\sqrt{n})$-type regret when the matrix $\rmA$ has a small condition number. In comparison, our result attains polylogarithmic $\epsilon$-approximate regret, while maintaining no requirement on the loss function or the condition number. Our result can be translated to an exact regret of $O\left(\epsilon \mathrm{OPT} + O\left(\epsilon^{-2}d \log n \log (\epsilon^{-1}dn)\right)\right)$, with $\mathrm{OPT}$ being the minimum possible cumulative loss attained by the hindsight best parameter. 
\end{rem}

\subsection{Method and results}
Similar to the low-rank matrix approximation problem, we utilize the leverage score method to learn a subspace that preserves information regarding the regression. Specifically, we use the leverage score to learn a $\epsilon$-subspace embedding, which is defined as follows.
\begin{defn}[$\epsilon$-Subspace Embedding]
A matrix $\rmS \in \sR^{m \times n}$ is said to be an $\epsilon$-subspace embedding of $\rmA \in \sR^{n \times d}$ if for any vector $x \in \sR^d$, we have 
$(1 - \epsilon) \| \rmA x\| \leq \|\rmS \rmA x\| \leq (1 + \epsilon) \| \rmA x\| $.
\end{defn}
The subspace embedding serves the same functionality as coreset in the problem of online regression, it preserves the loss of information while enjoying a much lower dimension. 
%
In the online regression problem context, we define the leverage score as follows.
\begin{defn}[Leverage Score]
    Let $\rmA = \rmU \mathbf{\Sigma} \rmV^\top$ be the singular value decomposition of $\rmA \in \sR^{n \times d}$.
    For $i \in [n]$, the $i$-th leverage score of $\rmA$, is defined as $\tau_i=\left\|\rmU_{i,:}\right\|_2^2$.
\end{defn}
With the leverage score, we propose Algorithm \ref{alg:regression}. The algorithm follows the general transformation framework, where the regression problem is solved at every step with the sketch derived from the aggregated matrix using leverage score. 
For notational convenience, we construct the sketch by appending rows instead of columns as we did in Section~\ref{sec:matrix}.
\begin{algorithm}[!ht]
\DontPrintSemicolon
  \KwInput{Approximation parameters $\epsilon \in (0,1)$}
  Set $\delta = O(\epsilon/n)$ and $m=O\left(\epsilon^{-2}d \log (\delta^{-1}d)\right)$. \;
  \For{$t = 1, \ldots, n$}{
  Construct $\rmA_{t-1} \in \sR^{(t-1) \times d}$ by stacking $a_1^\top, \ldots a_{t-1}^\top$.\;
  Construct $b \in \sR^{t-1}$ by stacking $b_1,\ldots, b_{t-1}$.\;
  Set $\rmS^t \in \mathbb{R}^{m \times (t-1)}$ be the zero matrix.\;
  \For{$j = 1,\ldots,m$}{
    Sample $i \in [t-1]$ with probability $p_i :=\frac{\tau_i(\rmA_{t-1})}{\sum_{j=1}^{t-1} \tau_j(\rmA_{t-1})}$.\;
    Sample $w \in \mathbb{R}$ uniformly from $\left[\frac{1}{\sqrt{m p_i}}, \frac{1+\epsilon}{\sqrt{m p_i}} \right] $.\;
    Replace the $j$-th row of $\rmS^t$ with $w \cdot e_i^\top$, where $e_i\in \sR^{t-1}$ is a one-hot vector with $1$ on the $i$-th index. 
  }
  Solve the regression problem $x_t = \min_x\|\rmS^t \rmA_{t-1} x- \rmS^t b\|_2$, e.g., by an iterative method such as Newton's method. \; 
  Receive $a_t \in \sR^d$, $b_t \in \sR$, and loss $\ell(x_t, a_t, b_t)$.\;
  }
\caption{Online consistent regression}\label{alg:regression}
\end{algorithm}

The subspace embedding result of \citet{woodruff2014sketching} immediately shows the following:
\begin{thm}\label{thm:subspace_regression}
For any $\epsilon,\delta \in (0,1)$, if $m=O\left(\epsilon^{-2}d \log (\delta^{-1}d)\right)$, then with probability $\geq 1-\delta$, $\rmS^t$ is an $\epsilon$-subspace embedding for $\rmA_{t-1}$ with $O\left(\epsilon^{-2}d \log (\delta^{-1}d)\right)$ columns. 
\end{thm}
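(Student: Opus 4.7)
The plan is to reduce Theorem~\ref{thm:subspace_regression} to the standard leverage-score subspace embedding result and then absorb the additional uniform weight perturbation as a mild multiplicative distortion. Define $\epsilon' = \epsilon/3$; I will aim for an $\epsilon'$-subspace embedding in the idealized setting and then lose another $(1+\epsilon')$ factor from the perturbation, which combines into a clean $\epsilon$-subspace embedding.

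First, I would consider an \emph{idealized} sampling matrix $\tilde{\rmS}^t \in \mathbb{R}^{m \times (t-1)}$ in which the $j$-th row is exactly $\frac{1}{\sqrt{m p_{i_j}}} e_{i_j}^\top$, with indices $i_j$ sampled i.i.d.\ from the leverage-score distribution $p_i = \tau_i(\rmA_{t-1}) / \sum_k \tau_k(\rmA_{t-1})$. The standard leverage-score sampling theorem (see Theorem~17 of \citep{woodruff2014sketching}), proved via a matrix Chernoff/Bernstein inequality applied to the rescaled rows of the orthonormal basis $\rmU$ of the column span of $\rmA_{t-1}$ (using the fact that $\tau_i = \|\rmU_{i,:}\|_2^2$ and $\sum_i \tau_i \le d$), guarantees that for
\[
m \;=\; \Omega\!\left(\epsilon'^{-2}\, d \,\log(d/\delta)\right),
\]
the matrix $\tilde{\rmS}^t$ is an $\epsilon'$-subspace embedding of $\rmA_{t-1}$ with probability at least $1 - \delta$.

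Next, I would relate the actual sketch $\rmS^t$ to the idealized $\tilde{\rmS}^t$. By construction, $\rmS^t = \rmD \tilde{\rmS}^t$, where $\rmD$ is the $m \times m$ diagonal matrix whose $j$-th entry is $w_j \sqrt{m p_{i_j}}$; since the algorithm draws $w_j$ uniformly from $[1/\sqrt{m p_{i_j}},\, (1+\epsilon')/\sqrt{m p_{i_j}}]$, every diagonal entry lies in $[1, 1+\epsilon']$. Therefore, for any $x \in \mathbb{R}^d$,
\[
\|\rmS^t \rmA_{t-1} x\|_2^2 \;=\; \sum_{j=1}^m d_j^2 \bigl(\tilde{\rmS}^t \rmA_{t-1} x\bigr)_j^2 \;\in\; \bigl[1,\,(1+\epsilon')^2\bigr]\cdot \|\tilde{\rmS}^t \rmA_{t-1} x\|_2^2.
\]
Combining this with the high-probability bound from the previous step gives
\[
\|\rmS^t \rmA_{t-1} x\|_2 \;\in\; \bigl[(1-\epsilon'),\,(1+\epsilon')(1+\epsilon')\bigr]\cdot \|\rmA_{t-1} x\|_2 \;\subseteq\; (1\pm\epsilon)\cdot \|\rmA_{t-1} x\|_2,
\]
for $\epsilon \in (0,1)$, yielding the claim (with the same asymptotic bound on $m$ since $\epsilon'^{-2} = \Theta(\epsilon^{-2})$).

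The main technical content is entirely inherited from Step~1, namely the matrix concentration argument for leverage-score row sampling; this is precisely the result cited from \citep{woodruff2014sketching}, so the only original work here is the bookkeeping in Steps~2--3 that handles the uniform perturbation introduced to enable the later average-sensitivity analysis. The potential obstacle is ensuring that the perturbation interval is stated in terms of $\epsilon'$ (not $\epsilon$) and that constants are chosen so that the factor-of-three loss in the final triangle-inequality composition does not inflate the $m$ requirement beyond $O(\epsilon^{-2} d \log(d/\delta))$; both are handled by the rescaling $\epsilon' = \epsilon/3$ above.
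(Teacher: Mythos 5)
Your proof is correct and takes essentially the same route as the paper, which simply cites the leverage-score subspace-embedding theorem of \citep{woodruff2014sketching} and asserts that the result follows immediately. The one piece of genuine bookkeeping you add --- factoring $\rmS^t$ as a diagonal matrix with entries in $[1,1+\epsilon']$ times the idealized unperturbed sketch, so that the uniform weight perturbation costs only an extra $(1+\epsilon')$ multiplicative factor absorbed by the rescaling $\epsilon'=\epsilon/3$ --- is a detail the paper leaves implicit, and you handle it correctly.
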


To obtain Theorem \ref{thm:regression}, we first analyze the average sensitivity of the leverage score sampling. Then, with Theorem \ref{thm:subspace_regression} and the general reduction Theorem \ref{thm:regret}, we obtain the regret bound.

\section{Experiments}
We here provide a preliminary empirical evaluation of our framework in the context of online $k$-means clustering, and online linear regression, with the result shown in Figure \ref{fig}. Our experiments are conducted with various approximation ratios and experimental setups ($\epsilon = 0.1, 0.01, 0.001$, with $k=3$ or $k=5$ clusters). We then compare the performance of the proposed algorithm to the hindsight optimal solution. For $k$-means clustering, we obtain the hindsight optimal solution by applying $k$-means++ to all the data. In the context of regression, we utilize the least square formula to compute the hindsight optimal solution. Our experimental results demonstrate that the proposed algorithm is highly effective, and its performance aligns with our theoretical findings. 

\begin{figure}[h]
     \centering
     \begin{subfigure}[b]{0.3\textwidth}
         \centering
         \includegraphics[width=\textwidth]{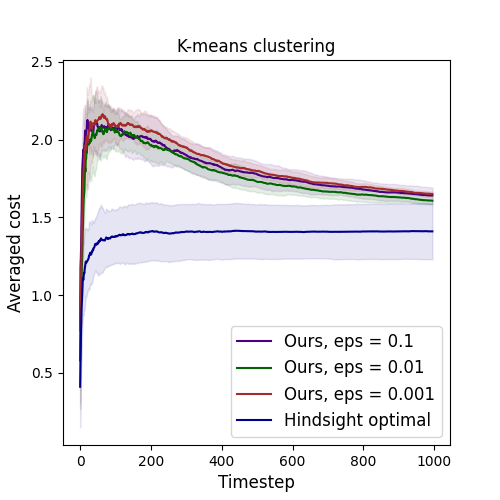}
         \caption{$k$-means, $3$ clusters}
     \end{subfigure}
     \hfill
     \begin{subfigure}[b]{0.3\textwidth}
         \centering
         \includegraphics[width=\textwidth]{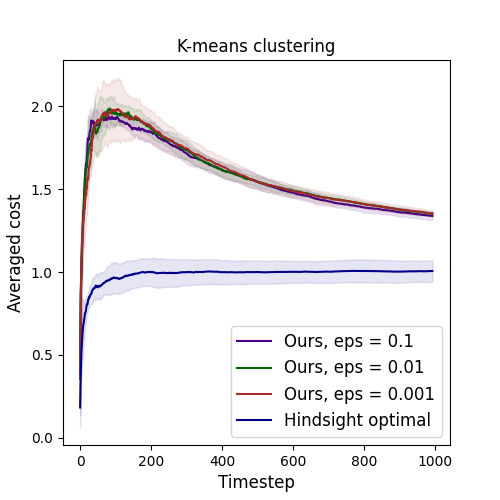}
         \caption{$k$-means, $5$ clusters}
     \end{subfigure}
     \hfill
     \begin{subfigure}[b]{0.3\textwidth}
         \centering
         \includegraphics[width=\textwidth]{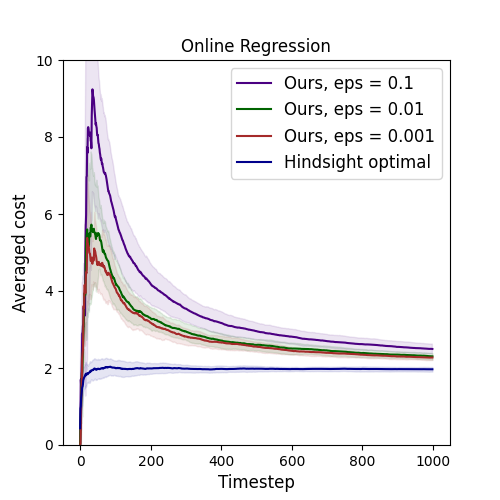}
         \caption{Regression}
     \end{subfigure}
        \caption{Experimental results for $k$-means clustering with $3, 5$ clusters and online regression. Each experiment is repeated with $5$ different random seed to ensure reproducible results. The shaded region indicates the $1$ standard deviation. }\label{fig}
\end{figure}
\section{Conclusion}
In this paper, we proposed a batch-to-online transformation framework that designs consistent online approximation algorithms from offline approximation algorithms. Our framework transforms an offline approximation algorithm with low average sensitivity to an online algorithm with low approximate regret. We then show a general method that can transform any offline approximation algorithm into one with low sensitivity by using a stable coreset. To demonstrate the generality of our framework, we applied it to online $(k,z)$-clustering, online matrix approximation, and online regression. Through the transformation result, we obtain polylogarithmic approximate regret for all of the problems mentioned. 

\section*{Acknowledgement}
This work is supported by JSPS KAKENHI Grant Number 20H05965 and 22H05001.
\bibliography{ref}

\newpage
\appendix

\section{Proofs for Section~\ref{sec:framework} }


\looandreplace* 
\begin{proof}
    We have
    \begin{align*}
    & \frac{1}{n}\sum_{i = 1}^n  \E_{\mathcal{A}}[\ell(\mathcal{A}(X^{(i)}),x_i)] \\
    & \leq \frac{1}{n}\sum_{i = 1}^n  \E_{\mathcal{A}}[\ell(\mathcal{A}(X),x_i)]
    + \frac{1}{n}\sum_{i = 1}^n  \bigl|\E_{\mathcal{A}}[\ell(\mathcal{A}(X),x_i)] - \E_{\mathcal{A}} [\ell(\mathcal{A}(X^{(i)}), x_i)]\bigr| \\
    & \leq \frac{1}{n}\sum_{i = 1}^n  \E_{\mathcal{A}}[\ell(\mathcal{A}(X),x_i)]
    + \frac{1}{n}\sum_{i = 1}^n \mathrm{TV}(\mathcal{A}(X), \mathcal{A}(X^{(i)})) \\
    & \leq \frac{1}{n}\sum_{i = 1}^n \E_{\mathcal{A}}[\ell(\mathcal{A}(X),x_i)] + \beta(n).
  \end{align*}
  The other direction can be shown analogously.
\end{proof}

\newpage
\section{Proofs for Section~\ref{sec:coreset}}

In this section, we prove Lemma~\ref{lem:asis}.

\begin{lem}\label{lem:as_1}
For any $i \in[n]$ and $x \in X^{(i)}$, let $\theta^{(i)} = \argmax_\theta \frac{\ell(\theta, x)}{\sum_{x^\prime \in X^{(i)}} \ell(\theta, x^\prime)}$.
Then, we have
\begin{align*}
    0 \leq \sigma_{X^{(i)}}(x)-\sigma_X(x) \leq \frac{\ell\left(\theta^{(i)}, x_i\right) \cdot \ell\left(\theta^{(i)}, x\right)}{\sum_{x^{\prime} \in X^{(i)}} \ell\left(\theta^{(i)}, x^{\prime}\right) \cdot \sum_{x^{\prime} \in X} \ell\left(\theta^{(i)}, x^{\prime}\right)}\,.
\end{align*}

\end{lem}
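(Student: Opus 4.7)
The plan is to prove each inequality separately, both with elementary manipulations that rely on how the denominator in the definition of $\sigma_X(x)$ changes when we remove the point $x_i$.

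For the lower bound $\sigma_{X^{(i)}}(x) \geq \sigma_X(x)$, first I would observe that for every fixed $\theta \in \Theta$, since $\ell(\theta, x_i) \geq 0$, we have $\sum_{x' \in X^{(i)}} \ell(\theta, x') \leq \sum_{x' \in X} \ell(\theta, x')$. Hence
\[
    \frac{\ell(\theta, x)}{\sum_{x' \in X^{(i)}} \ell(\theta, x')} \geq \frac{\ell(\theta, x)}{\sum_{x' \in X} \ell(\theta, x')}.
\]
Taking $\sup_\theta$ on both sides yields $\sigma_{X^{(i)}}(x) \geq \sigma_X(x)$.

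For the upper bound, I would use the particular maximizer $\theta^{(i)}$ to sandwich the difference. By the definition of $\theta^{(i)}$, we have $\sigma_{X^{(i)}}(x) = \ell(\theta^{(i)}, x) / \sum_{x' \in X^{(i)}} \ell(\theta^{(i)}, x')$. On the other hand, $\sigma_X(x)$ is a supremum, so plugging in the specific choice $\theta = \theta^{(i)}$ yields the bound $\sigma_X(x) \geq \ell(\theta^{(i)}, x) / \sum_{x' \in X} \ell(\theta^{(i)}, x')$. Subtracting these and writing $A := \sum_{x' \in X^{(i)}} \ell(\theta^{(i)}, x')$ and $B := \ell(\theta^{(i)}, x_i)$ gives
\[
    \sigma_{X^{(i)}}(x) - \sigma_X(x) \leq \ell(\theta^{(i)},x)\left(\frac{1}{A} - \frac{1}{A+B}\right) = \frac{\ell(\theta^{(i)},x)\cdot B}{A(A+B)},
\]
which is exactly the claimed bound after substituting back the definitions of $A$ and $B$.

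There isn't a serious obstacle here; the only subtle point is taking care to use the supremum in the correct direction for each bound (upper bounding $\sigma_{X^{(i)}}$ by its exact value at the maximizer $\theta^{(i)}$, and lower bounding $\sigma_X$ by its value at the same $\theta^{(i)}$). If the supremum is not attained, one can replace $\theta^{(i)}$ by a near-maximizer and pass to the limit, but the statement in the lemma already posits that an argmax exists, so this technicality does not arise.
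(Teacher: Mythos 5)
Your proposal is correct and follows essentially the same route as the paper: the lower bound comes from the pointwise monotonicity of the denominator (the paper phrases this via the maximizer of $\sigma_X(x)$, you via taking suprema of the pointwise inequality, which is the same observation), and the upper bound evaluates $\sigma_{X^{(i)}}(x)$ exactly at $\theta^{(i)}$ while lower-bounding $\sigma_X(x)$ at the same $\theta^{(i)}$, then uses $\frac{1}{A}-\frac{1}{A+B}=\frac{B}{A(A+B)}$ exactly as the paper does.
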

\begin{proof}
Denote $\theta = \argmax_\theta \frac{\ell(\theta, x)}{\sum_{x^\prime \in X} \ell(\theta, x^\prime)}$, then for the left-hand side of the inequality, we have
\begin{align*}
    \sigma_{X^{(i)}}(x)-\sigma_X(x) 
    \geq \ & \frac{\ell(\theta, x)}{\sum_{x^{\prime} \in X^{(i)}} \ell\left(\theta, x^{\prime}\right)}-\frac{\ell(\theta, x)}{\sum_{x^{\prime} \in X} \ell\left(\theta, x^{\prime}\right)} \\
    \geq \ & 0 \,.
\end{align*}

For the second inequality, we have
\begin{align*}
\sigma_{X^{(i)}}(x)-\sigma_X(x) 
\leq \ & \frac{\ell\left(\theta^{(i)}, x\right)}{\sum_{x^{\prime} \in X^{(i)}} \ell\left(\theta^{(i)}, x^{\prime}\right)}-\frac{\ell\left(\theta^{(i)}, x\right)}{\sum_{x^{\prime} \in X} \ell\left(\theta^{(i)}, x^{\prime}\right)} \\
= \ & \frac{\ell\left(\theta^{(i)}, x_i\right) \cdot \ell\left(\theta^{(i)}, x\right)}{\sum_{x^{\prime} \in X^{(i)}} \ell\left(\theta^{(i)}, x^{\prime}\right) \cdot \sum_{x^{\prime} \in X} \ell\left(\theta^{(i)}, x^{\prime}\right)} \,.
\end{align*}
\end{proof}

\begin{lem}\label{lem:as_2}
    For any $i \in [n]$, we have 
    \begin{align*}
        \sum^n_{i=1} \left|\sum_{x \in X}\sigma_X(x) - \sum_{x^\prime \in X^{(i)}} \sigma_{X^{(i)}}(x^\prime)\right|\leq \sum_{x \in X}\sigma_X(x) \,.
    \end{align*}
\end{lem}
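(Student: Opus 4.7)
The plan is to exploit the two-sided bounds from Lemma~\ref{lem:as_1} to reduce the target expression to something algebraically simple, then perform a careful global summation. First, by the lower inequality of Lemma~\ref{lem:as_1}, $\sigma_{X^{(i)}}(x) \geq \sigma_X(x)$ for every $x \in X^{(i)}$, so the nonnegative quantity $D_i := \sum_{x \in X^{(i)}}\bigl(\sigma_{X^{(i)}}(x) - \sigma_X(x)\bigr)$ is well-defined, and an elementary rearrangement gives
\[
\sum_{x \in X}\sigma_X(x) - \sum_{x' \in X^{(i)}} \sigma_{X^{(i)}}(x') \;=\; \sigma_X(x_i) - D_i.
\]
This reduces the claim to showing $\sum_{i=1}^n |\sigma_X(x_i) - D_i| \leq \sum_{i=1}^n \sigma_X(x_i)$.

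Next, I would use the upper inequality of Lemma~\ref{lem:as_1} to control $D_i$. Factoring its right-hand side as $\sigma_{X^{(i)}}(x) \cdot \ell(\theta^{(i)}, x_i)/\sum_{x'}\ell(\theta^{(i)}, x')$ and bounding the second factor by $\sigma_X(x_i)$ (using the very definition of sensitivity), then summing over $x \in X^{(i)}$, I obtain the key estimate $D_i \leq \sigma_X(x_i) \cdot S^{(i)}$, where $S^{(i)} := \sum_{x' \in X^{(i)}} \sigma_{X^{(i)}}(x')$. Rearranging yields $S^{(i)}(1-\sigma_X(x_i)) \leq S - \sigma_X(x_i)$, which together with $D_i \geq 0$ simultaneously bounds both directions of the deviation $S - S^{(i)}$.

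To finish, I would split the sum according to the sign of $\sigma_X(x_i) - D_i$. For indices where $D_i \leq \sigma_X(x_i)$, the contribution is immediately at most $\sigma_X(x_i)$, which sums to $S$. For indices with $D_i > \sigma_X(x_i)$, I would combine the bound $D_i \leq \sigma_X(x_i)\,S^{(i)}$ with the global identity $\sum_i(S - S^{(i)}) = S - \sum_i D_i$ in order to transfer the excess onto the previously-counted downward contributions. The hard part will be this last bookkeeping step: naively applying the triangle inequality only yields $\sum_i |S - S^{(i)}| \leq S + \sum_i D_i$, which is too weak, and the per-index estimate $|\sigma_X(x_i) - D_i| \leq \sigma_X(x_i)$ is \emph{not} true in general (one can cook up small asymmetric instances in which $D_i$ exceeds $2\sigma_X(x_i)$), so the bound must come from the coupling across indices via $S^{(i)}(1-\sigma_X(x_i)) \leq S - \sigma_X(x_i)$ rather than from a pointwise argument.
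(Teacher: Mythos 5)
Your opening decomposition is exactly the paper's: write $\sum_{x \in X}\sigma_X(x) - \sum_{x' \in X^{(i)}}\sigma_{X^{(i)}}(x') = \sigma_X(x_i) - D_i$ with $D_i := \sum_{x \in X^{(i)}}(\sigma_{X^{(i)}}(x)-\sigma_X(x)) \ge 0$ from the lower bound of Lemma~\ref{lem:as_1}, so that each term is at most $\sigma_X(x_i)$ from above. The divergence — and the gap — is in how you control $D_i$ from above. The paper sums the upper bound of Lemma~\ref{lem:as_1} over $x \in X^{(i)}$ \emph{before} relaxing anything: the factors $\ell(\theta^{(i)},x)$ in the numerators telescope against the denominator $\sum_{x' \in X^{(i)}}\ell(\theta^{(i)},x')$, leaving
\[
D_i \;\le\; \sum_{x \in X^{(i)}} \frac{\ell(\theta^{(i)},x_i)\,\ell(\theta^{(i)},x)}{\sum_{x' \in X^{(i)}}\ell(\theta^{(i)},x')\cdot\sum_{x' \in X}\ell(\theta^{(i)},x')} \;=\; \frac{\ell(\theta^{(i)},x_i)}{\sum_{x' \in X}\ell(\theta^{(i)},x')} \;\le\; \sigma_X(x_i),
\]
the last step by the definition of $\sigma_X(x_i)$ as a supremum over $\theta$. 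This yields the pointwise estimate $0 \le \sigma_X(x_i) - D_i \le \sigma_X(x_i)$, the absolute values vanish, and summing over $i$ finishes the proof in one line. You instead bound the factor $\ell(\theta^{(i)},x_i)/\sum_{x'\in X}\ell(\theta^{(i)},x')$ by $\sigma_X(x_i)$ term by term first, which destroys the telescoping and leaves the spurious factor $S^{(i)}$, and you then assert that the pointwise bound $|\sigma_X(x_i)-D_i|\le\sigma_X(x_i)$ "is not true in general" and defer to an unspecified cross-index bookkeeping that you acknowledge you have not carried out. That final step is the entire content of the lemma, so as written the proposal does not constitute a proof.

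One caveat in your favor: the telescoping step above silently assumes the maximizer $\theta^{(i)}$ in Lemma~\ref{lem:as_1} can be taken independent of $x$, whereas that lemma defines $\theta^{(i)}$ per pair $(i,x)$; with an $x$-dependent $\theta^{(i)}$ one only gets your weaker bound $D_i \le \sigma_X(x_i)S^{(i)}$. So you have correctly located a real subtlety in the argument. But you neither produce the counterexample you allude to nor supply the global argument that would replace the pointwise one, so the gap remains yours to close; the intended (and paper's) route is the telescoping sum displayed above.
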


\begin{proof}
By Lemma \ref{lem:as_1}, we have
$$
\sum_{x \in X}\sigma_X(x) - \sum_{x^\prime \in X^{(i)}} \sigma_{X^{(i)}}(x^\prime)=\sigma_X\left(x_i\right)-\sum_{x \in X^{(i)}}\left(\sigma_{X^{(i)}}(x)-\sigma_X(x)\right) \leq \sigma_X\left(x_i\right) \,,
$$
and we have
$$
\begin{aligned}
\sum_{x \in X}\sigma_X(x) - \sum_{x^\prime \in X^{(i)}} \sigma_{X^{(i)}}(x^\prime)=\ & \sigma_X\left(x_i\right)-\sum_{x \in X^{(i)}}\left(\sigma_{X^{(i)}}(x)-\sigma_X(x)\right) \\
\geq \ & \sigma_X\left(x_i\right)-\sum_{x \in X^{(i)}} \frac{\ell\left(\theta^{(i)}, x_i\right) \cdot \ell\left(\theta^{(i)}, x\right)}{\sum_{x^{\prime} \in X^{(i)}} \ell\left(\theta^{(i)}, x^{\prime}\right) \cdot \sum_{x^{\prime} \in X} \ell\left(\theta^{(i)}, x^{\prime}\right)} \\
= \ & \sigma_X\left(x_i\right)-\frac{\ell\left(\theta^{(i)}, x_i\right)}{\sum_{x^{\prime} \in X} \ell\left(\theta^{(i)}, x^{\prime}\right)} \geq 0 \,.
\end{aligned}
$$
Then, we have
\begin{align*}
& \sum_{i=1}^n\left|\sum_{x \in X}\sigma_X(x) - \sum_{x^\prime \in X^{(i)}} \sigma_{X^{(i)}}(x^\prime)\right|=\sum_{i=1}^n\left(\sum_{x \in X}\sigma_X(x) - \sum_{x^\prime \in X^{(i)}} \sigma_{X^{(i)}}(x^\prime)\right) \\
& \leq \sum_{i=1}^n \sigma_X\left(x_i\right) \leq \sum_{x \in X}\sigma_X(x) \,.    
\end{align*}
\end{proof}

\begin{lem}\label{lem:as_3}
    We have
$$
\sum_{i=1}^n \sum_{x \in X^{(i)}}\left|\frac{\sigma_X(x)}{\sum_{x \in X}\sigma_X(x) }-\frac{\sigma_{X^{(i)}}(x)}{\sum_{x^\prime \in X^{(i)}} \sigma_{X^{(i)}}(x^\prime)}\right| \leq 2 \,.
$$
\end{lem}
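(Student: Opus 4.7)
Write $S := \sum_{x \in X}\sigma_X(x)$ and $S^{(i)} := \sum_{x' \in X^{(i)}}\sigma_{X^{(i)}}(x')$, so that the quantities inside the absolute value become normalized probability masses. The basic strategy is to pass from $\tfrac{a}{b}-\tfrac{c}{d}$ to one term that controls the change in numerator and another that controls the change in denominator, using the identity
\[
\left|\frac{a}{b}-\frac{c}{d}\right|
\;\le\; \frac{|a-c|}{b} \;+\; \frac{c\,|b-d|}{b\,d},
\]
applied with $a=\sigma_X(x)$, $b=S$, $c=\sigma_{X^{(i)}}(x)$, $d=S^{(i)}$. The point is that Lemma~\ref{lem:as_1} and Lemma~\ref{lem:as_2} are each tailored exactly to one of these two pieces: the former controls pointwise changes of sensitivity, the latter controls changes in the total sensitivity.

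\textbf{Bounding each piece for fixed $i$.} For the first summand, I would sum over $x \in X^{(i)}$ and use Lemma~\ref{lem:as_1}, which tells us $\sigma_{X^{(i)}}(x)\ge \sigma_X(x)$ for every $x \in X^{(i)}$, so the absolute values drop and the sum telescopes to
\[
\frac{1}{S}\sum_{x\in X^{(i)}}\bigl(\sigma_{X^{(i)}}(x)-\sigma_X(x)\bigr)
\;=\; \frac{S^{(i)}-\bigl(S-\sigma_X(x_i)\bigr)}{S}
\;=\; \frac{\sigma_X(x_i)-(S-S^{(i)})}{S}.
\]
For the second summand, the factor $\sigma_{X^{(i)}}(x)$ summed over $x\in X^{(i)}$ gives exactly $S^{(i)}$, so the piece collapses to $|S-S^{(i)}|/S$. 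Combining, the per-$i$ bound becomes $\sigma_X(x_i)/S$ once we use the fact (established in the proof of Lemma~\ref{lem:as_2}) that $0 \le S-S^{(i)} \le \sigma_X(x_i)$, which lets us drop the absolute values consistently.

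\textbf{Finishing.} Summing the per-$i$ bound over $i\in[n]$ yields $\frac{1}{S}\sum_{i=1}^n \sigma_X(x_i)=1$, which is already within the claimed bound of $2$. A coarser route that avoids using the sign information $S\ge S^{(i)}$ would simply bound the first piece by $|S-S^{(i)}|/S+\sigma_X(x_i)/S$ via the triangle inequality on $\sigma_X(x_i)-(S-S^{(i)})$, and then apply Lemma~\ref{lem:as_2} globally to get $\sum_i |S-S^{(i)}|/S\le 1$, producing the bound $2$ exactly as stated.

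\textbf{Main obstacle.} There is no serious obstacle: given Lemmas~\ref{lem:as_1} and~\ref{lem:as_2}, the proof is essentially bookkeeping. The only subtle point is choosing the right triangle-inequality split and noticing that the two resulting sums line up with exactly the two lemmas available. The one place where carelessness could cost us is the swap between $\sum_{x \in X^{(i)}}\sigma_X(x)$ and $S$, which differ by $\sigma_X(x_i)$; handling this correctly is what produces the clean telescoping above and ultimately the constant in the bound.
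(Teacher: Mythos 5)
Your proof is correct and follows essentially the same route as the paper: both split $\frac{\sigma_X(x)}{S}-\frac{\sigma_{X^{(i)}}(x)}{S^{(i)}}$ into a numerator-change term and a denominator-change term, bound the former via Lemma~\ref{lem:as_1} and the latter via Lemma~\ref{lem:as_2}. The one genuine (minor) difference is that you exploit the sign information $\sigma_{X^{(i)}}(x)\ge\sigma_X(x)$ and $0\le S-S^{(i)}\le\sigma_X(x_i)$ to telescope the first piece exactly, so the two pieces combine to $\sigma_X(x_i)/S$ per index and sum to $1$, which is slightly sharper than the paper's bound of $2$ (the paper instead re-inserts the explicit loss-ratio upper bound from Lemma~\ref{lem:as_1} and bounds the two sums separately by $1$ each).
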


\begin{proof}
First, we have

\begin{align*}
& \frac{\sigma_X(x)}{\sum_{x \in X} \sigma_{X}(x)}-\frac{\sigma_{X^{(i)}}(x)}{\sum_{x^\prime \in X^{(i)}} \sigma_{X^{(i)}}(x^\prime)}\\
= \ & \frac{\sigma_X(x)}{\sum_{x \in X} \sigma_{X}(x)}-\frac{\sigma_{X^{(i)}}(x)}{\sum_{x \in X} \sigma_{X}(x)}\left(1-\frac{\sum_{x^\prime \in X^{(i)}} \sigma_{X^{(i)}}(x^\prime)-\sum_{x \in X} \sigma_{X}(x)}{\sum_{x^\prime \in X^{(i)}} \sigma_{X^{(i)}}(x^\prime)}\right) \\
= \ & \sigma_{X^{(i)}}(x) \frac{\sum_{x^\prime \in X^{(i)}} \sigma_{X^{(i)}}(x^\prime)-\sum_{x \in X} \sigma_{X}(x)}{\sum_{x \in X} \sigma_{X}(x) \sum_{x^\prime \in X^{(i)}} \sigma_{X^{(i)}}(x^\prime)}-\frac{1}{\sum_{x \in X} \sigma_{X}(x)}\left(\sigma_{X^{(i)}}(x)-\sigma_X(x)\right) \,.
\end{align*}

We can bound this quantity from below and above by Lemma \ref{lem:as_1},
\begin{align*}
& \sigma_{X^{(i)}}(x) \frac{\sum_{x^\prime \in X^{(i)}} \sigma_{X^{(i)}}(x^\prime)-\sum_{x \in X} \sigma_{X}(x)}{\sum_{x \in X} \sigma_{X}(x) \sum_{x^\prime \in X^{(i)}} \sigma_{X^{(i)}}(x^\prime)}-\frac{1}{\sum_{x \in X} \sigma_{X}(x)} \frac{\ell\left(\theta^{(i)}, x_i\right) \ell\left(\theta^{(i)}, x\right)}{\sum_{x^{\prime} \in X} \ell\left(\theta^{(i)}, x^{\prime}\right) \cdot \sum_{x^{\prime} \in X^{(i)}} \ell\left(\theta^{(i)}, x^{\prime}\right)} \\
\leq \ & \frac{\sigma_X(x)}{\sum_{x \in X} \sigma_{X}(x)}-\frac{\sigma_{X^{(i)}}(x)}{\sum_{x^\prime \in X^{(i)}} \sigma_{X^{(i)}}(x^\prime)} \\
\leq \ & \sigma_{X^{(i)}}(x) \frac{\sum_{x^\prime \in X^{(i)}} \sigma_{X^{(i)}}(x^\prime)-\sum_{x \in X} \sigma_{X}(x)}{\sum_{x \in X} \sigma_{X}(x) \sum_{x^\prime \in X^{(i)}} \sigma_{X^{(i)}}(x^\prime)} \,.
\end{align*}

Then, we have
\begin{align*}
    & \left|\frac{\sigma_X(x)}{\sum_{x \in X} \sigma_{X}(x)}-\frac{\sigma_{X^{(i)}}(x)}{\sum_{x^\prime \in X^{(i)}} \sigma_{X^{(i)}}(x^\prime)}\right|  \\
    \leq \ & \frac{\sigma_{X^{(i)}}(x)}{\sum_{x \in X} \sigma_{X}(x) \sum_{x^\prime \in X^{(i)}} \sigma_{X^{(i)}}(x^\prime)}\left|\sum_{x^\prime \in X^{(i)}} \sigma_{X^{(i)}}(x^\prime)-\sum_{x \in X} \sigma_{X}(x)\right| \\
    & \ +\frac{1}{\sum_{x \in X} \sigma_{X}(x)} \frac{\ell\left(\theta^{(i)}, x_i\right) \ell\left(\theta^{(i)}, x\right)}{\sum_{x^{\prime} \in X} \ell\left(\theta^{(i)}, x^{\prime}\right) \cdot \sum_{x^{\prime} \in X^{(i)}} \ell\left(\theta^{(i)}, x^{\prime}\right)}\,.
\end{align*}

It then follows,
\begin{align*}
& \sum_{i=1}^n \sum_{x \in X^{(i)}}\left|\frac{\sigma_X(x)}{\sum_{x \in X} \sigma_{X}(x)}-\frac{\sigma_{X^{(i)}}(x)}{\sum_{x^\prime \in X^{(i)}} \sigma_{X^{(i)}}(x^\prime)}\right| \\
\leq \ & \sum_{i=1}^n \sum_{x \in X^{(i)}}\left(\frac{\sigma_{X^{(i)}}(x)}{\sum_{x \in X} \sigma_{X}(x) \sum_{x^\prime \in X^{(i)}} \sigma_{X^{(i)}}(x^\prime)}\left|\sum_{x^\prime \in X^{(i)}} \sigma_{X^{(i)}}(x^\prime)-\sum_{x \in X} \sigma_{X}(x)\right| \right. \\
& \ \left.+\frac{1}{\sum_{x \in X} \sigma_{X}(x)} \frac{\ell\left(\theta^{(i)}, x_i\right) \ell\left(\theta^{(i)}, x\right)}{\sum_{x^{\prime} \in X} \ell\left(\theta^{(i)}, x^{\prime}\right) \cdot \sum_{x^{\prime} \in X^{(i)}} \ell\left(\theta^{(i)}, x^{\prime}\right)}\right) \\
& =\sum_{i=1}^n\left(\frac{\left|\sum_{x^\prime \in X^{(i)}} \sigma_{X^{(i)}}(x^\prime)-\sum_{x \in X} \sigma_{X}(x)\right|}{\sum_{x \in X} \sigma_{X}(x)}+\frac{1}{\sum_{x \in X} \sigma_{X}(x)} \frac{\ell\left(\theta^{(i)}, x_i\right)}{\sum_{x^{\prime} \in X} \ell\left(\theta^{(i)}, x^{\prime}\right)}\right) \\
\leq \ & 1+\frac{1}{\sum_{x \in X} \sigma_{X}(x)} \\
\leq \ & 2 \,,
\end{align*}
where the second to last inequality is from Lemma \ref{lem:as_2}, and the last inequality is by $\sum_{x \in X} \sigma_{X}(x) \geq 1$.
\end{proof}

\begin{lem}\label{lem:tv}
For $\epsilon>0$, let $X$ and $X^{\prime}$ be sampled from the uniform distribution over $[B,(1+\epsilon) B]$ and $\left[B^{\prime},(1+\epsilon) B^{\prime}\right]$, respectively. Then, we have
$$
\mathrm{TV}(X, X^{\prime}) \leq \frac{1+\epsilon}{\epsilon}\left|1-\frac{B^{\prime}}{B}\right| \,.
$$
\end{lem}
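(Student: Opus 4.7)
\textbf{Proof plan for Lemma~\ref{lem:tv}.}

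The plan is a direct computation of the total variation distance between the two uniform densities, using
\[
    \mathrm{TV}(X,X') \;=\; \tfrac{1}{2}\int_{\mathbb{R}} |f_X(t) - f_{X'}(t)|\,dt,
\]
where $f_X(t) = \frac{1}{\epsilon B}\mathbb{I}[t \in [B,(1+\epsilon)B]]$ and $f_{X'}(t) = \frac{1}{\epsilon B'}\mathbb{I}[t \in [B',(1+\epsilon)B']]$. By symmetry of $\mathrm{TV}$ and of the intended bound (which I will verify satisfies $\frac{1+\epsilon}{\epsilon}|1 - B'/B| \ge \frac{1+\epsilon}{\epsilon}|1 - B/B'| \cdot \frac{B'}{B}$ after algebraic rearrangement, so the inequality carries over under swapping), it suffices to treat the case $B \le B'$, and I will set $r := B'/B \ge 1$ throughout.

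First I would dispose of the easy disjoint-support case $r > 1+\epsilon$. Here $\mathrm{TV}(X,X')=1$, while $\frac{1+\epsilon}{\epsilon}|1-r| = \frac{1+\epsilon}{\epsilon}(r-1) > \frac{1+\epsilon}{\epsilon}\cdot\epsilon = 1+\epsilon \geq 1$, so the bound holds.

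Next, for the overlapping case $1 \le r \le 1+\epsilon$, I would split $\mathbb{R}$ into three disjoint pieces where the integrand is constant: the left tail $[B, B')$ where only $f_X$ is nonzero, the overlap $[B',(1+\epsilon)B]$ where both densities are positive, and the right tail $((1+\epsilon)B, (1+\epsilon)B']$ where only $f_{X'}$ is nonzero. Evaluating the three contributions to $\int |f_X - f_{X'}|$ gives respectively
\[
    \frac{B'-B}{\epsilon B}, \qquad \frac{((1+\epsilon)B - B')(B'-B)}{\epsilon B B'}, \qquad \frac{(1+\epsilon)(B'-B)}{\epsilon B'}.
\]
Summing and factoring $\frac{r-1}{\epsilon r}$ collapses the bracketed sum to $2(1+\epsilon)$, yielding
\[
    \mathrm{TV}(X,X') = \frac{(1+\epsilon)(r-1)}{\epsilon r}.
\]
Finally, since $r \ge 1$ implies $1/r \le 1$, this quantity is at most $\frac{(1+\epsilon)(r-1)}{\epsilon} = \frac{1+\epsilon}{\epsilon}|1 - B'/B|$, which is the claimed bound.

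The only mildly delicate point is the bookkeeping in the three-piece integral and confirming that after the WLOG reduction $B \le B'$ the asymmetric right-hand side is still the correct one to compare against; once the closed form $\mathrm{TV} = (1+\epsilon)(r-1)/(\epsilon r)$ is in hand, the final step is trivial because the only saving over the stated bound is the factor $1/r \le 1$. There is no genuine obstacle; the work is routine integration.
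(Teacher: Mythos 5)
Your proof is correct, but it is worth noting that the paper does not actually prove this lemma at all: its ``proof'' is a one-line deferral to Lemma~2.3 of the cited work of Kumabe and Yoshida, where the statement is said to be implicit. Your argument is therefore a genuinely different (and self-contained) route: a direct evaluation of $\tfrac12\int|f_X-f_{X'}|$ over the three regions, which in fact yields the exact closed form $\mathrm{TV}(X,X')=\frac{(1+\epsilon)|B-B'|}{\epsilon\,\max(B,B')}$ in the overlapping case --- strictly stronger than the stated bound, which follows since $\max(B,B')\ge B$. The bracket computation checks out ($r+(1+\epsilon-r)+(1+\epsilon)=2(1+\epsilon)$), and the disjoint-support case is handled correctly. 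Two small remarks. First, your ``WLOG $B\le B'$'' discussion is more convoluted than necessary: since the exact formula you derive is symmetric in $B$ and $B'$, you can simply observe that it holds for both orderings and that dividing by $B$ rather than $\max(B,B')$ only weakens it (indeed, for $B>B'$ the stated bound is attained with equality, so the asymmetry of the right-hand side is essential and your parenthetical identity $|1-B'/B|=|1-B/B'|\cdot B'/B$ is exactly the reason the swap is harmless). Second, you implicitly assume $B,B'>0$ so that the intervals are nondegenerate and correctly oriented; this is not stated in the lemma but is the only case used in the paper (the $B$'s are sampling probabilities), so it is a reasonable reading rather than a gap.
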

\begin{proof}
    The proof is implicit in Lemma 2.3 of \cite{kumabe2022averagea}.
\end{proof}

\asis* 
\begin{proof}
     The average sensitivity of importance sampling can be bounded by the sum of the average total variation distance between selected elements and that between assigned weights, conditioned on that the selected elements are the same. The former is bounded by
    \begin{align*}
        & \ \frac{1}{n} \sum_{i=1}^n |C|\cdot\left(\frac{\sigma_X\left(x_i\right)}{\sum_{x \in X}\sigma_X(x)}+\sum_{x \in X^{(i)}}\left|\frac{\sigma_X(x)}{\sum_{x \in X}\sigma_X(x)}-\frac{\sigma_{X^{(i)}}(x)}{\sum_{x^\prime \in X^{(i)}} \sigma_{X^{(i)}}(x^\prime)}\right|\right)\\
        = \ & O\left(\frac{|C|}{n}\right)+O\left(\frac{|C|}{n}\right)\\
        = \ & O\left(\frac{|C|}{n}\right) \,,
    \end{align*}
    where the first equality is Lemma \ref{lem:as_3}. With Lemma \ref{lem:tv}, we have the latter as
    \begin{align*}
& \frac{1}{n} \sum_{i=1}^n |C| \cdot\left(\sum_{x \in X^{(i)}} \min \left\{\frac{\sigma_X(x)}{\sum_{x \in X} \sigma_{X}(x)}, \frac{\sigma_{X^{(i)}}(x)}{\sum_{x^\prime \in X^{(i)}} \sigma_{X^{(i)}}(x^\prime)}\right\} \cdot \frac{1+\epsilon}{\epsilon}\left|1-\frac{\sigma_{X^{(i)}}(x) / \left(\sum_{x^\prime \in X^{(i)}} \sigma_{X^{(i)}}(x^\prime)\right)}{\sigma_X(x) / \sum_{x \in X} \sigma_{X}(x)}\right|\right) \\
\leq \ & \frac{1}{n} \sum_{i=1}^n |C| \cdot\left(\sum_{x \in X^{(i)}} \frac{1+\epsilon}{\epsilon}\left|\frac{\sigma_X(x)}{\sum_{x \in X} \sigma_{X}(x)}-\frac{\sigma_{X^{(i)}}(x)}{\sum_{x^\prime \in X^{(i)}} \sigma_{X^{(i)}}(x^\prime)}\right|\right) \\
= \ & O\left(\frac{|C|}{\epsilon n}\right) \,.
\end{align*}
Combining the two terms, we have the average sensitivity of importance sampling be bounded as $ O\left(\frac{|C|}{n}\right)  + O\left(\frac{|C|}{\epsilon n}\right) = O\left(\frac{|C|}{\epsilon n}\right) $.
\end{proof}

\newpage
\section{Proofs for Section~\ref{sec:clustering}}

\subsection{Algorithm}
We now introduced the detailed version of Algorithm \ref{alg:clustering}. In Algorithm \ref{alg:clustering_core}, we provide a detailed description of the coreset construction method for clustering, which is based on \citet{huang2020coresets}.
To make the coreset construction enjoys small average sensitivity, we perturbe the weights assigned (Line 6 and Line 11). 
We show that this preserves the approximation ratio while makes the overall algorithm insensitive.  
With this, we obtain the online consistent clustering Algorithm \ref{alg:clustering_appendix}, which clusters data from a coreset at each step.
%

\begin{algorithm}[!ht]
\DontPrintSemicolon
  \KwInput{PTAS algorithm $\mathcal{D}$ for $(k, z)$-clustering, approximation ratio $\epsilon, \delta \in (0,1)$.}
  \For{$t = 1, \ldots, n$}{
  Construct an $\epsilon$-coreset $C_{t-1} = (S_{t-1}, \omega_{t-1})$ by running Algorithm \ref{alg:clustering_core} on $X_{t-1}$.  \;
  Obtain cluster set $Z_t$ by running a PTAS $\mathcal{D}$ with approximation ratio of $(1+\epsilon)$ on $C_{t-1}$. \;
  Receive $x_t$ and $\ell(Z_t, x_t)$.
  }
\caption{Online consistent $(k,z)$-clustering}\label{alg:clustering_appendix}
\end{algorithm}

\begin{algorithm}[!ht]
\DontPrintSemicolon
  \KwInput{A set of point $X$, approximation parameter $\epsilon, \delta \in (0,1)$, integer $k, z$. }
  Set $\epsilon = \epsilon / c$ for some large constant $c > 0$. \;
  Compute a $k$-center set $C^{\ast}_t \subseteq \mathbb{R}^d$ as an $\epsilon$-approximation of the $(k, z)$-clustering problem over $X$ with the $D^z$-sampling algorithm with an approximation ratio of $O(2^z \log k)$. \; 
    For each $x \in X$, compute the closest point to $x$ in $C^{\ast}$, $c^{\ast}(x)$, with ties broken arbitrarily. For each $c \in C^{\ast}$, denote $X^c$ to be the set of points $x \in X$ with $c^{\ast}(x)=c$. \;
    For each $x \in X$, let $\sigma_{1,X}(x) = 2^{2 z+2} \epsilon^2\left(\frac{\left\|x - c^{\ast}(x)\right\|_2^z}{\sum_{x \in X}\ell\left(C^{\ast}, x\right)}+\frac{1}{\left|X_{c^{\ast}(x)}\right|}\right)$. \;
    Pick a non-uniform random sample $D^1$ of $N_1 = O\left((168 z)^{10 z} \epsilon^{-5 z-15} k^5 \log \frac{k}{\delta}\right)$ points from $X$, where each $x \in X$ is selected with probability $\frac{\sigma_{1,X}(x)}{\sum_{y \in X} \sigma_{1,X}(y)}$. \;
    For each $x \in D^1$, sample $\tilde{u}_{X}(x)$ from $\left[\frac{\sum_{y \in X} \sigma_{1,X}(y)}{\left|D^1_t\right| \cdot \sigma_{1,X}(x)},(1+\epsilon) \frac{\sum_{y \in X} \sigma_{1,X}(y)}{\left|D^1_t\right| \cdot \sigma_{1,X}(x)}\right]$. \;
    Set $u_X(x) = \tilde{u}_{X}(x)$\;
    For each $c \in C^{\ast}$, compute $D_c$ to be the set of points in $D^1$ whose closest point in $C^{\ast}$ is $c$ with ties broken arbitrarily. \;
    For each $x \in D^1$, let $\sigma_{2,X}(x) = \frac{u_{X}(x) \cdot \ell\left(C^{\ast}, x\right)}{\sum_{y \in D^1}u_{X}(y) \cdot \ell\left( C^{\ast}, y\right)}$\;
    Pick a non-uniform random sample $D^2$ of $N_2 = O\left(\epsilon^{-2 z-2} k \log k \log \frac{k}{\epsilon \delta}\right)$ points from $X_t$, where each $x \in X_t$ is selected with probability $\frac{\sigma_{2,X}(x)}{\sum_{y \in D^1} \sigma_{2,X}(y)}$\;
    For each $x \in D^2$, sample $\tilde{w}_{X}(x)$ from $ \left[\frac{\sum_{y \in D^1} \sigma_{2,X}(y)}{\left|D^2\right| \cdot \sigma_{2,X}(x)}, (1+\epsilon)\frac{\sum_{y \in D^1} \sigma_{2,X}(y)}{\left|D^2\right| \cdot \sigma_{2,X}(x)}\right]$. \;
    Set $w_X(x) = \tilde{w}_{X}(x)$\;
    For each $c \in C^{\ast}$, let $w_{X}(c) = (1+10 \epsilon) \sum_{x \in D_c} u_{X}(x)-\sum_{x \in D^2 \cap D_c} w_{X}(x)$. \;
    $S =  D^2 \cup C^{\ast}$\;
    $w(x) = w_{X}(x)$ \;
    Output $(S, w)$\;
\caption{Coreset construction for clustering \cite{huang2020coresets}}\label{alg:clustering_core}
\end{algorithm}

\subsection{Analysis}
\begin{restatable}[]{thm}{asclustering}
\label{thm:as_clustering}
Algorithm~\ref{alg:clustering_core} outputs an $\epsilon$-coreset with probability at least $1-\delta$ and has an average sensitivity of 
\[
O\left(\frac{k + (168 z)^{10 z} \epsilon^{-5 z-15} k^6 \log \frac{k}{\delta} + \epsilon^{-2 z-2} k \log k \log \frac{k}{\epsilon \delta}( 1 + \epsilon^{-2 z-2} k^2 \log k \log \frac{k}{\epsilon \delta})}{n}\right) \,.
\]
\end{restatable}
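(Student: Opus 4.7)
The statement has two components: the coreset guarantee and the average sensitivity bound. For correctness, I would follow the analysis of \citet{huang2020coresets} essentially unchanged. The only modification relative to their construction is that we draw the weights $\tilde u_X(x)$ and $\tilde w_X(x)$ from the intervals $[B,(1+\epsilon)B]$ (Lines 6 and 11) rather than setting them deterministically to $B$. Each such perturbation scales weights by a factor in $[1,1+\epsilon]$, so the coreset guarantee deteriorates by at most a $(1+\epsilon)$ factor per stage. Rescaling $\epsilon$ by the constant $c$ in Line~1 recovers the claimed $(1\pm\epsilon)$ approximation with probability $\geq 1-\delta$ exactly as in their theorem.

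The substantive work is the sensitivity bound, which I would obtain by decomposing the algorithm into three stages and composing them via a coupling argument with the triangle inequality $\mathrm{TV}(\mathcal{A}(X), \mathcal{A}(X^{(i)})) \le \mathrm{TV}(C^*, C^{*(i)}) + \mathbb{E}[\mathrm{TV}((D^1,u) \mid C^*\!=\!C^{*(i)})] + \mathbb{E}[\mathrm{TV}((D^2,w) \mid C^*, D^1 \text{ coupled})]$. Averaging over $i$ turns each term into the respective stage's average sensitivity. \emph{Stage A} (Line~2, computing $C^*$): $D^z$-sampling is itself a $k$-round importance sampling, and since only $k$ points are drawn, its average sensitivity is $O(k/n)$ by essentially the same argument as Lemma~\ref{lem:asis}. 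This explains the leading $k/n$ term. \emph{Stage B} (Lines 3--7, producing $(D^1,u_X)$): conditioned on $C^*$ being identical across $X$ and $X^{(i)}$, the probabilities $\sigma_{1,X}(\cdot)$ fit the template of Lemma~\ref{lem:asis} (they are a bounded sum of a per-cluster contribution and a per-point loss ratio), so drawing $N_1$ samples with perturbed weights contributes $O(N_1/(\epsilon n))$; the weight perturbation in Line~6 is engineered precisely so Lemma~\ref{lem:tv} applies pointwise. This yields the $N_1\cdot k/n$ term once we add the cost of Stage~A decoupling. \emph{Stage C} (Lines 8--14): repeats the Stage~B analysis but with input $D^1$ of size $N_1$ and output size $N_2$, giving $O(N_2/(\epsilon n))$ for the selected points and weights. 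The remaining $N_2^2 k/n$ contribution comes from Line~14, where the re-weight $w_X(c)$ for each $c\in C^*$ aggregates up to $|D^1|$ individual $u_X$'s; the stability of this sum under a single input deletion is what drives the quadratic dependence on $N_2$ and the extra factor $k$ for iterating over centers.

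The main obstacle is the cascade: because $N_1$ and $N_2$ are both large, whenever the conditioning on a previous stage being unchanged fails, the TV distance to the next stage's output jumps to~$1$ and all of the intra-stage savings are lost. Two ingredients control this. First, the weight perturbations in Lines~6 and~11 turn a discrete mismatch of sampling probabilities into a continuous one that Lemma~\ref{lem:tv} controls tightly, so within a stage the coupling is smooth rather than all-or-nothing. Second, the probabilities $\sigma_{1,X}$ and $\sigma_{2,X}$ have built-in normalizations (the $1/|X_{c^*(x)}|$ term and the division by $\sum_y u_X(y)\ell(C^*,y)$ respectively) that absorb small perturbations of the previous stage's output, allowing the three-stage composition to telescope. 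I would finish by plugging $N_1 = O((168z)^{10z}\epsilon^{-5z-15}k^5\log(k/\delta))$ and $N_2 = O(\epsilon^{-2z-2}k\log k\log(k/(\epsilon\delta)))$ into $(k + N_1 k + N_2 + N_2^2 k)/n$ to recover the stated bound.
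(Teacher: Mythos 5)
Your overall strategy matches the paper's: argue correctness by noting that the weight perturbations rescale everything by at most $(1+\epsilon)$ so the analysis of \citet{huang2020coresets} (via \citet{feldman2011unified}) goes through after the $\epsilon/c$ rescaling, and bound the average sensitivity stage by stage, using Lemma~\ref{lem:asis} and Lemma~\ref{lem:tv} for each importance-sampling round and an $O(k/n)$ bound for the $D^z$-sampling step. Where you diverge is in trying to account for the $k^6$ factor and the quadratic-in-$N_2$ term appearing in the theorem statement, and the mechanisms you propose for them do not hold up. Composing the stages by a coupling plus the triangle inequality makes the stage contributions \emph{add}, not multiply: the event that $C^*$ fails to couple contributes $O(k/n)$ on its own, so "Stage~A decoupling" cannot inflate the Stage~B term from $N_1/n$ to $N_1 k/n$. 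Likewise, Line~14 is a deterministic function of $(C^*, D^1, u_X, D^2, w_X)$, so by the data-processing inequality it adds nothing to the total variation distance and cannot be the source of an $N_2^2 k$ term. In fact the paper's own proof derives only the additive bound $O\bigl((k + N_1 + N_2)/n\bigr)$ --- strictly smaller than what the theorem statement displays --- and that tighter quantity is what is actually used downstream in Theorem~\ref{thm:regret_cluster}; the extra factors in the statement appear to be vestigial, and your attempt to reverse-engineer them introduces incorrect steps rather than filling a real gap.

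One thing you flag that the paper glosses over is the cascade: $\sigma_{2,X}$ depends on the realized $(D^1, u_X)$, so the second stage's sensitivity must be computed conditional on the first stage coupling successfully, with the failure probability charged additively. Your instinct there is right and is a genuine point the paper's proof handles only implicitly ("in a similar way as to that for the first stage"); but your resolution of it is as informal as theirs, and it should yield an additive $O(N_1/n)$ charge, not a multiplicative one.
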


\begin{proof}
    We first show that our coreset construction method gives an $\epsilon$-coreset. 

    We remark that the way we assigned the weights is a perturbed version of the method presented in \citet{huang2020coresets}. Then, to show that the perturbed version of the assignment of the weight still preserves the coreset approximation ratio, we show that the perturbed version still satisfies Corollary 15.3 of \citet{feldman2011unified} and Theorem 14.5. Then using the same argument as Theorem 15.4 of \citet{feldman2011unified}, our coreset construction method gives an $\epsilon$-coreset.

    In the first importance sampling stage, we only perturb $\Tilde{u}_X(x)$ by a ratio of $1+\epsilon$, for each $x \in D^1$. The same is applied in the second stage with $\Tilde{w}_X(x)$, for all $x \in D^2$.  For $c \in C^\ast$, we have
    \begin{align*}
        (1+10 \epsilon) \sum_{x \in D_c} u_{X}(x)-\sum_{x \in D^2 \cap D_c} w_{X}(x) 
        \leq (1 + \epsilon) \left((1+10 \epsilon) \sum_{x \in D_c} \tilde{u}_{X}(x)-\sum_{x \in D^2 \cap D_c} \tilde{w}_{X}(x) \right) \,.
    \end{align*}
    In all cases, the result from \citep{feldman2011unified} still holds, as the weights are scaled by $(1+\epsilon)$ at most. By the same argument that all perturbed weights are scaled by $(1+\epsilon)$ at most, we have Theorem 15.4 of \citet{feldman2011unified} holds with a ratio of $\epsilon (1+\epsilon)$. This results in an approximation of $1 + \epsilon(1 + \epsilon)$ by applying the argument of Theorem 15.4 of \citet{feldman2011unified}. Rescale $\epsilon$ to $\epsilon / c$ for some constant $c > 0$ gives an approximation ratio of $1+\epsilon$ and completes our argument that Algorithm \ref{alg:clustering_core} produces an $\epsilon$-coreset with probability at least $1-\delta$.
    
    To show that our method enjoys low average sensitivity, we upper bound the average sensitivity of the two importance sampling stage of Algorithm \ref{alg:clustering_core} separately. 
    
    The average sensitivity of importance sampling in the first stage can be bounded by the average total variation distance between selected elements and that between assigned weights, conditioned on the selected elements are the same. 
    By Lemma \ref{lem:asis}, we can upper bound the average sensitivity of the first importance sampling stage to be $O(N_1/n)$, where $N_1 = O\left((168 z)^{10 z} \epsilon^{-5 z-15} k^5 \log (\delta^{-1}k) \right)$.
    

    In the second stage, as points are selected with probability $\frac{\sigma_{2,X}(x)}{\sum_{y \in X} \sigma_{2,X}(y)}$, we bound the total variation distance between selected points in a similar way as to that for the first stage. This gives a bound of  $O(N_2/n) $, where $N_2 = O\left(\epsilon^{-2 z-2} k \log k \log \frac{k}{\epsilon \delta}\right)$.
   To bound the average total variation distance between assigned weights, we again apply the same argument as above and obtain a bound of  $O(N_2/n)$.
   
   Combining these and that the average sensitivity of Line 2 of Algorithm \ref{alg:clustering_core} is $O(k/n)$ (by Lemma 2.2 of \cite{yoshida2022average}), we have the average sensitivity as 
   \[
        O\left(\frac{k + N_1 + N_2}{n}\right) 
       =  O\left(\frac{k + (168 z)^{10 z} \epsilon^{-5 z-15} k^5 \log \frac{k}{\delta}+  \epsilon^{-2 z-2} k \log k \log \frac{k}{\epsilon \delta}}{n}\right) \,.
       \qedhere
   \]
\end{proof}

\regretcluster*
\begin{proof}
    First, we note that the approximation ratio of the algorithm with respect to the aggregated loss is at most
    \[
        (1-\delta)\left(1+\frac{\epsilon}{3}\right)\left(1+\frac{\epsilon}{3}\right) + \delta n \leq 1+\epsilon
    \]
    from the choice of $\delta$, i.e., $\delta = O(\epsilon /n)$.
    
    Also, we note that the overall average sensitivity is
    \begin{align*}
        \sum^n_{t=1}\beta(t)
        = \ & \sum^n_{t=1}O \left( \frac{(168 z)^{10 z} \epsilon^{-5 z-15} k^5 \log \frac{k}{\delta}+  \epsilon^{-2 z-2} k \log k \log \frac{k}{\epsilon \delta}}{t} \right) \\
        = \ & O \left( \left({(168 z)^{10 z} \epsilon^{-5 z-15} k^5 \log \frac{k}{\delta}+  \epsilon^{-2 z-2} k \log k \log \frac{k}{\epsilon \delta}}\right)\log n  \right) \,.
    \end{align*}
    Then substituting this into Theorem \ref{thm:regret}, for $z \geq 1$, we have a regret bound of  
    \begin{align*}
        \mathrm{Regret}_\epsilon(n) 
        \leq \ & O \left( \left({(168 z)^{10 z} \epsilon^{-5 z-15} k^5 \log \frac{k}{\delta}+  \epsilon^{-2 z-2} k \log k \log \frac{k}{\epsilon \delta}}\right)\log n \right) \\
        = \ & O \left( \left({(168 z)^{10 z} \epsilon^{-5 z-15} k^5 \log \frac{kn}{\epsilon}+  \epsilon^{-2 z-2} k \log k \log \frac{kn}{\epsilon}}\right)\log n \right) \,.
    \end{align*}

    For the second claim about inconsistency, we first convert the average sensitivity bound to an inconsistency bound of the same order. This can be done by Lemma 4.2 of~\citep{yoshida2022average} and by arguing in a similar way as Lemma 4.5 of~\citep{yoshida2022average} by showing that there exists a computable probability transportation for the output of Algorithm \ref{alg:clustering}. This thus yields an inconsistency of $O \left( \left({(168 z)^{10 z} \epsilon^{-5 z-15} k^5 \log (\epsilon^{-1}kn)  \epsilon^{-2 z-2} k \log k \log (\epsilon^{-1}kn)}\right)\log n \right) $.
\end{proof}

\newpage
\section{Proofs for Section~\ref{sec:matrix}}
\regretmatrix*
\begin{proof}
Let $\delta>0$ be determined later.
By Theorem~6 from \cite{cohen2017input}, with probability $1-\delta$, for any rank-$k$ orthogonal projection $\rmX$, by sampling $m = O \left(\epsilon^{-2}k\log(\delta^{-1}k)\right)$ columns, in Line $4$ of Algorithm \ref{alg:regression}, we have,
$$
\left(1-\frac{\epsilon}{2}\right)\|\rmA_t -\rmX\rmA_t\|_F^2 \leq\|\rmC_t -\rmX \rmC_t\|_F^2 \leq \left(1+\frac{\epsilon}{2}\right)\|\rmA_t-\rmX\rmA_t\|_F^2 \,.
$$
With this, we note that the algorithm has an approximation of $1 + \epsilon$ as $1 + \epsilon/2 + \delta n \leq 1 + \epsilon$ by choosing the hidden constant in $\delta$ small enough.

Note that by applying Lemma \ref{lem:as_is}, this routine has an average sensitivity of $O(m/t) = O \left(\epsilon^{-2}k\log(\delta^{-1}k)/ t\right)$ at any step $t$. 
Then set $\rmZ_t$ to the top $k$ left singular vectors of $\rmC_t$, and we have
\begin{align*}
    \left\|\rmA_t -\rmZ_t\rmZ_t^\top\rmA_t\right\|_F 
    \leq \ & (1+\epsilon) \left\|\rmA_t -\rmU_k\rmU_k^\top\rmA_t\right\|_F \,.
\end{align*}

To obtain the regret, we calculate the overall average sensitivity as $\sum_{t=1}^n \beta(t) = \sum^n_{t=1} O \left(\epsilon^{-2}k\log(\delta^{-1}k)/ t\right) =  O \left(\epsilon^{-2}k\log n\log(\delta^{-1}k) \right) $.
Applying Theorem \ref{thm:regret}, and from the choice $\delta  = O(\epsilon / n)$, we have
$ \mathrm{Regret}_\epsilon(n) =  O \left( \epsilon^{-2}k \log n\log(\epsilon^{-1}k n) \right) $.

For the second claim about inconsistency, we prove this convert the average sensitivity bound to an inconsistency bound of the same order. This can be done by Lemma 4.2 of~\citep{yoshida2022average} and by arguing in a similar way as Lemma 4.5 of~\citep{yoshida2022average} by showing that there exists a computable probability transportation for the output of Algorithm \ref{alg:clustering}. 
\end{proof}

\newpage
\section{Proof for Section~\ref{sec:regression}}
\regression*

\begin{proof}
By Theorem \ref{thm:subspace_regression}, the algorithm has an approximation of $1 + \epsilon$ as $1 + \epsilon/2 + \delta n \leq 1 + \epsilon$ by choosing the hidden constant in $\delta$ small enough.

Similar to the low-rank case, the average sensitivity of leverage score sampling at step $t$ is $O\left(\frac{d \log (d / \delta)}{\epsilon^2 t}\right)$. Summing over the average sensitivity
\begin{align*}
    \sum^n_{t=1} O\left(\frac{d \log (d / \delta)}{\epsilon^2 t}\right) = O\left(\frac{d \log n \log (d / \delta)}{\epsilon^2}\right) \,.
\end{align*}
Take this into Theorem \ref{alg:reduction}, and with $\delta = O(\epsilon / n)$, we have a $\epsilon$-regret of $O\left(\epsilon^{-2} d \log n \log (dn / \epsilon)\right)$.

Similar to that of Theorem \ref{thm:regret_cluster} and Theorem \ref{thm:regretmatrix}, we prove the second claim by converting the average sensitivity bound to an inconsistency bound of the same order. This can be done by Lemma 4.2 of~\citep{yoshida2022average} and by arguing in a similar way as Lemma 4.5 of~\citep{yoshida2022average} by showing that there exists a computable probability transportation for the output of Algorithm \ref{alg:clustering}.  
\end{proof}
\end{document}